\def\eqref#1{equation~\ref{#1}}
\def\1{\bm{1}}
\DeclareMathAlphabet{\mathsfit}{\encodingdefault}{\sfdefault}{m}{sl}
\SetMathAlphabet{\mathsfit}{bold}{\encodingdefault}{\sfdefault}{bx}{n}
\newcommand{\KL}{D_{\mathrm{KL}}}
\theoremstyle{definition}
\newtheorem{definition}{Definition}
\newtheorem{example}{Example}
\theoremstyle{plain}
\newtheorem{theorem}{Theorem}
\newtheorem{proposition}{Proposition}
\newtheorem{corollary}{Corollary}
\newcommand{\Comment}[1]{}
\DeclareMathOperator{\Supp}{supp}
\newcommand{\AP}[1]{\left(#1\right)}
\newcommand{\AB}[1]{\left[#1\right]}
\newcommand{\AC}[1]{\left\{#1\right\}}
\newcommand{\APM}[2]{\left(#1\;\middle\vert\;#2\right)}
\newcommand{\ABM}[2]{\left[#1\;\middle\vert\;#2\right]}
\newcommand{\ACM}[2]{\left\{#1\;\middle\vert\;#2\right\}}
\newcommand{\Pa}[2]{\underset{#1}{\operatorname{Pr}}\AB{#2}}
\newcommand{\CP}[3]{\underset{#1}{\operatorname{Pr}}\ABM{#2}{#3}}
\newcommand{\Ea}[2]{\underset{#1}{\operatorname{E}}\AB{#2}}
\newcommand{\CE}[3]{\underset{#1}{\operatorname{E}}\ABM{#2}{#3}}
\newcommand{\CI}[3]{\underset{#1}{\operatorname{I}}\ABM{#2}{#3}}
\newcommand{\Ia}[2]{\underset{#1}{\operatorname{I}}\AB{#2}}
\newcommand{\Ena}[1]{\operatorname{H}\AP{#1}}
\newcommand{\PS}[1]{\mathcal{P}\AP{#1}}
\newcommand{\D}{\mathrm{d}}
\newcommand{\KLD}[2]{\KL\left(#1 \mid #2\right)}
\newcommand{\Dtva}[1]{\operatorname{d}_{\textnormal{tv}}\AP{#1}}
\newcommand{\Argmax}[1]{\underset{#1}{\operatorname{arg\,max}}\,}
\newcommand{\Nats}{\mathbb{N}}
\newcommand{\Reals}{\mathbb{R}}
\newcommand{\Abs}[1]{\left\vert #1 \right\vert}
\newcommand{\Floor}[1]{\left\lfloor #1 \right\rfloor}
\newcommand{\Ceil}[1]{\left\lceil #1 \right\rceil}
\newcommand{\K}{\xrightarrow{\textnormal{k}}}
\newcommand{\A}{\mathcal{A}}
\newcommand{\St}{\mathcal{S}}
\newcommand{\T}{\mathcal{T}}
\newcommand{\Rew}{\mathcal{R}}
\newcommand{\Ut}{\operatorname{U}}
\newcommand{\V}{\operatorname{V}}
\newcommand{\Q}{\operatorname{Q}}
\newcommand{\EU}{\operatorname{EU}}
\newcommand{\Rg}{\operatorname{Reg}}
\newcommand{\MP}[2]{#1#2}
\newcommand{\MA}[2]{#1\AB{#2}}
\newcommand{\AdP}[2]{\AB{#1}#2}
\newcommand{\Tn}{\mathfrak{t}}
\newcommand{\Ad}{\upsilon}
\newcommand{\FH}{\mathcal{H}}
\newcommand{\ND}{\operatorname{D}}
\newcommand{\Rev}{\beta}
\newcommand{\RC}{\Xi}
\newcommand{\B}{\mathcal{B}}
\newcommand{\X}{\bullet}
\newcommand{\PoS}{:\St_\X^* \times \St_\X \K \A_\X}
\newcommand{\IP}{\pi^{!k}}
\title{Delegative Reinforcement Learning: learning to avoid traps with a little help}
\author{Vanessa Kosoy\\
Independent Researcher\\
Petah Tikva, Israel \\
\texttt{vanessa.kosoy@intelligence.org}
}
\begin{document}

\maketitle

\begin{abstract}
Most known regret bounds for reinforcement learning are either episodic or assume an environment without traps. We derive a regret bound without making either assumption, by allowing the algorithm to occasionally delegate an action to an external advisor. We thus arrive at a setting of active one-shot model-based reinforcement learning that we call DRL (delegative reinforcement learning.) The algorithm we construct in order to demonstrate the regret bound is a variant of Posterior Sampling Reinforcement Learning supplemented by a subroutine that decides which actions should be delegated. The algorithm is not anytime, since the parameters must be adjusted according to the target time discount. \Comment{We also demonstrate that this setting can handle situations in which the reward signal and the advisor become unreliable in particular environment states (assuming these states can be avoided.) }Currently, our analysis is limited to Markov decision processes with finite numbers of hypotheses, states and actions.
\end{abstract}

\section{Introduction}

A \emph{reinforcement learning agent} is a system that interacts with an unknown environment in a manner that is designed to maximize the expectation of a utility function that can be written as a sum of rewards over time (sometimes weighted by a time-discount function.) A standard metric for evaluating the performance of such an agent is the \emph{regret}: the difference between the expected utility of the agent in a given environment, and the expected utility of an optimal policy for the same environment. This metric allows formalizing the notion of \enquote{the agent \emph{learns} the environment} by requiring that the regret has sublinear growth in the planning horizon (usually assuming the utility function is a finite, undiscounted, sum of rewards.) For example, if we consider stateless environments, reinforcement learning reduces to a multi-armed bandit for which algorithms with guaranteed sublinear regret bounds are well-known (see e.g. \cite{Bubeck2012}.)

However, the desideratum of sublinear regret is impossible to achieve even for a finite class of environments without making further assumptions, and this is because of the possible presence of \enquote{traps}. A trap is a state which, once reached, forces a linear lower bound on regret. Consider the following example. The agent starts at state $s_1$, and as long as it takes action $a$, it receives a reward of $1$. However, if it ever takes action $b$, it will reach state $s_2$ and remain there, receiving a reward of $0$ forever, whatever it does. Thus, $s_2$ is a trap. On the other hand, it is impossible to design an algorithm which guarantees never entering traps for an arbitrary environment. For example, consider the environment that has the same structure except actions $a$ and $b$ are exchanged. In this case, if the transition matrix is not known a priori, no algorithm can learn the correct behavior, and every algorithm will have linear regret in at least one of the two environments.

There are two widespread approaches to deriving regret bounds which circumvent this problem. One is simply assuming that the environment contains no traps in some formal sense (see e.g. \cite{Nguyen2013}.) The other is \enquote{episodic learning} (see e.g. \cite{Osband2014}.) In episodic learning, the timeline is divided into intervals (\enquote{episodes}) and, either the state is assumed to reset to the initial state after each episode, or regret is defined s.t. the contribution of each episode is the difference between following the given policy and following the given policy \emph{during previous episodes} but the optimal policy in the current episode. The latter metric doesn't consider entering a trap to be a fatal event, since in the following episodes this event will be considered as \enquote{given.} That is, a policy that enters trap can still achieve sublinear regret in this sense. In fact, algorithms designed to achieve sublinear regret for sufficiently general classes of environments have the property that they eventually enter \emph{every trap they encounter} (such algorithms have a random exploration phase, like e.g. $\epsilon$-exploration in Q-learning.)

In terms of practical applications, it means that most known approaches to reinforcement learning that have theoretical performance guarantees either assume that no mistake is \enquote{fatal}, or that numerous \enquote{fatal} mistakes in the training process are acceptable. These assumptions are unacceptable in applications such as controlling a very expensive, breakable piece of machinery (e.g. spaceship) or performing a task that involves significant risk to human lives (e.g. surgery or rescue,) assuming that the algorithm \emph{cannot} be reliably trained in a simulation since the simulation doesn't reflect all the intricacies of the physical world.

This problem clearly cannot be overcome without using prior knowledge about the environment. In itself, prior knowledge is not such a strong assumption, since at least for any task that can be accomplished by a person, this prior knowledge is already available to us. The challenge is then transferring this knowledge to algorithm. This transfer can be accomplished either by manually transforming the knowledge into a formal mathematical specification, or by establishing a learning protocol that involves a human in the loop. Since human knowledge is often complex, difficult to formalise and partly intuitive, the latter option seems especially attractive.

These idea of using prior knowledge or human intervention to avoid traps has been explored by several authors (see \cite{Garcia2015} for a survey.) However, to the best of our knowledge, no previous author has established a regret bound in such a setting. In the present work, we derive such a regret bound, specifically for the setting that \cite{Clouse1997} called \enquote{ask for help} and we call \enquote{delegative reinforcement learning} (DRL), and specifically for a class of environments which consists of some finite number of Markov decision processes with a finite number of states.

In DRL, an agent interacts with an environment during an infinite sequence of \enquote{rounds}. On each round, the agent selects an action and the environment transits to a new state which is observed by the agent. The agent then receives a reward which depends on the state. There are two kinds of actions the agent can take: a \enquote{direct} action $a \in \A$ and the special delegation action $\bot$. If the agent takes action $\bot$, the \emph{advisor} takes some action $b \in \A$ which affects the environment in the same way as if it was taken directly. The agent then observes both $b$ and the new state of the environment. The utility function and regret are defined via geometric time discount with a constant $\gamma$.

The algorithm we construct in order to show the regret bound is a variant of posterior sampling reinforcement learning (see \cite{Osband2013}). Denoting $\alpha:=1-\gamma$, the timeline is divided into intervals of length $O\AP{\alpha^{-\sfrac{1}{4}}}$. At the start of each interval, the algorithm samples a hypothesis out of its current belief state, and starts carrying out an optimal policy for this hypothesis. On each round, it checks whether the desired action is known to be \enquote{safe} with high probability in a particular formal sense. If it is safe, the action is taken. If it isn't safe, delegation is performed. Moreover, the belief state evolves using all observations, but hypotheses whose probability falls below $O\AP{\alpha^{\sfrac{1}{4}}}$ are discarded altogether. We then show that (i) given relatively mild assumptions about the advisor (namely, that it only takes safe actions and it takes the optimal action with at least some small probability,) the regret is bounded by $O\AP{\alpha^{-\sfrac{3}{4}}}$\footnote{See inequality~(\ref{eqn:crl__balanced_regret_bound__regret}). In our notation, regret is normalized by a factor of $\alpha$ to lie within $[0,1]$ (see Definition~\ref{def:utility}) so the bound is $O(\alpha^{\sfrac{1}{4}})$.} (in particular it is sublinear in $\alpha^{-1}$) and (ii) the number of delegations behaves like $O\AP{\alpha^{-\sfrac{1}{4}}}$\footnote{See inequality~(\ref{eqn:crl__balanced_regret_bound__delegations}).}. Here, we only gave the dependence on $\alpha$, but the expressions we obtain are more detailed and reflect the dependence on the number of hypothesis (which we assume to be finite), the derivative of the value functions of the hypotheses and the minimal probability with which the advisor takes an optimal action.

\Comment{The proof is based on the conjunction of four observations. The first is that, with high probability, our algorithm only takes safe actions (since it delegates whenever it is uncertain.) The second is that the number of delegations cannot be large for the \emph{information theoretic} reason that, delegation is only done under conditions in which it is expected to yield a certain minimal \emph{information gain}, whereas, since the number of hypotheses is some finite $N$, the total amount of information is also finite and equal to $\ln N$. The third is that, assuming all actions are safe, regret can be approximated by \enquote{episodic regret} (to show this we use the Taylor expansion of the value of states in the parameter $\alpha$.) The fourth is that posterior sampling reinforcement learning (without delegation) satisfies an upper bound on episode regret in terms of the information gain due to observing the reward during this episode (the intuition is, if observing the reward yields no information, then the optimal policy for any hypothesis is optimal for all other hypotheses, so there is no regret,) and the sum of information gains over episodes is again bounded by $\ln N$. Here, both information theoretic steps require the assumption that no hypothesis is assigned a very small but positive probability, which necessitates discarding such hypothesis. This \enquote{distortion} of the belief state has little effect because, obviously, hypotheses with very small probability are likely to be false.}

\Comment{We also consider situations in which there are states of the environment (that we call \enquote{corrupted}) for which the reward signal and/or the advisor become unreliable (i.e. the observed reward in these states might be different from the \enquote{true} reward w.r.t. which regret is defined, and the advisor in these states might fail to satisfy the assumptions we otherwise require from it.) For example, we might imagine a robot that, through its own actions, damages its own input channels or even provides \enquote{deliberately} misleading information to the human operator. Indeed, it has been argued~\cite{TBD} that reinforcement learning agents are incentivized to sabotage themselves in this way (for example \enquote{wirehead} i.e. tamper with itself in order to artificially set the reward to maximum) and therefore sufficiently powerful algorithms (beyond the current state of the art) are almost guaranteed to do so. We show that, assuming corrupted states can be avoided without sacrificing utility, and that the advisor in uncorrupted states never acts so as to enter a corrupted state, DRL can be used to achieve essentially the same regret bound as before (and in particular learn to avoid corruption.) Thus DRL not only combats traps in the external environment but also perverse incentives inside the agent itself.}

The structure of the paper is as follows. Section~\ref{sec:results} gives all the necessary definition and formally states the results. Appendix~\ref{sec:outline} explains the algorithm implicit in the main theorem and gives an outline of the proofs. Appendix~\ref{sec:details} completes the details of the proofs. 

\section{Results}
\label{sec:results}

We start by recalling some basic definitions and properties of Markov decision processes. See e.g. \cite{Feinberg2002} for a detailed overview with proofs. First, some notation. 

Given measurable spaces $X$ and $Y$, the notation $K: X \K Y$ means that $K$ is a Markov kernel from $X$ to $Y$. Given $x \in X$, $K(x)$ is the corresponding probability measure on $Y$. Given $A \subseteq Y$ measurable, $K(A \mid x) := K(x)(A)$. Given $y \in Y$, $K(y \mid x):=K\APM{\{y\}}{x}$. Given $J: Y \K Z$, $JK: X \K Z$ is the composition of $J$ and $K$, and when $Y = X$, $K^n$ is the $n$-th composition power.

\begin{samepage}
\begin{definition}

\emph{A (finite) Markov decision process} (MDP) is a tuple

$$M:=\AP{\St_M,\ \A_M,\ s_M\in \St_M,\ \T_M: \St_M \times \A_M \K \St_M,\ \Rew_M: \St_M \rightarrow [0,1]}$$

Here, $\St_M$ is a finite set (the set of states,) $\A_M$ is a non-empty finite set (the section of actions,) $s_M$ is the initial state, $\T_M$ is the transition kernel and $\Rew_M$ is the reward function\footnote{Sometimes the reward is assumed to depend on the action as well, or on the action and the next state, but these formalisms are easily seen to be equivalent via redefinitions of the state set.}.

\end{definition}
\end{samepage}

\begin{samepage}
\begin{definition}

Given $M$ an MDP and some $\pi: \St_M \rightarrow \A_M$, we define $\T_{M\pi}: \St_M \K \St_M$ by

\begin{equation}
\T_{M\pi}(t \mid s) := \T_M\APM{t}{s,\pi(s)}
\end{equation}

That is, $\T_{M\pi}$ is the transition kernel of the Markov chain resulting from policy $\pi$ interacting with environment $M$.

\end{definition}
\end{samepage}

\begin{samepage}
\begin{definition}
\label{def:vq}

Given $M$ an MDP, we define $\V_M : \St_M \times [0,1) \rightarrow [0,1]$ and $\Q_M: \St_M \times \A_M \times [0,1) \rightarrow [0,1]$ by

\begin{equation}
\label{eqn:def__vq__v}
\V_M(s,\gamma):=(1-\gamma) \max_{\pi: \St_M \rightarrow \A_M} \sum_{n=0}^\infty \gamma^n \Ea{\T_{M\pi}^n(s)}{\Rew_M}
\end{equation}

\begin{equation}
\label{eqn:def__vq__q}
\Q_M(s,a,\gamma):=(1-\gamma)\Rew_M(s)+\gamma\Ea{t \sim \T_{M}(s,a)}{\V_M(t,\gamma)}
\end{equation}

Thus, $\V_M(s,\gamma)$ is the maximal value that can be extracted from state $s$ and $\Q_M(s,a,\gamma)$ is the maximal value that can be extracted from state $s$ after performing action $a$.

\end{definition}
\end{samepage}

\begin{samepage}
\begin{definition}

Given $M$ an MDP, we define $\V^0_M : \St_M \rightarrow [0,1]$ and $\Q^0_M: \St_M \times \A_M \rightarrow [0,1]$ by

\begin{equation}
\V_M^0(s) := \lim_{\gamma \rightarrow 1} \V_M(s,\gamma)
\end{equation}

\begin{equation}
\Q_M^0(s,a) := \lim_{\gamma \rightarrow 1} \Q_M(s,a,\gamma)
\end{equation}

The limits above are guaranteed to exist, thanks to our assumptions that $\St$ and $\A$ are finite.

\end{definition}
\end{samepage}

Given a set $A$, the notation $\PS{A}$ denotes the power set of $\A$.

\begin{samepage}
\begin{definition}

Given $M$ an MDP, we define $\A_M^0: \St_M \rightarrow \PS{\A_M}$ by

\begin{equation}
\A_M^0(s) := \Argmax{a \in \A_M} \Q_M^0(s,a)
\end{equation}

That is, $\A_M^0(s)$ is the set of actions at state $s$ that don't enter traps (i.e. destroy value in the long run.)

\end{definition}
\end{samepage}

\begin{samepage}
\begin{definition}

Given $M$ an MDP, it is well known that there are $\A_M^\star: \St_M \rightarrow \PS{\A_M}$ (the set of Blackwell optimal actions: see \cite{Feinberg2002} chapter 8) and $\gamma_M\in[0,1)$ s.t. for any $\gamma\in\AP{\gamma_M,1}$

\begin{equation}
\A_M^\star(s) = \Argmax{a \in \A_M} \Q_M\AP{s,a,\gamma}
\end{equation}

Thus, $\A_M^\star(s)$ is the set of actions that are optimal at state $s$, assuming that we plan for sufficiently long term.

\end{definition}
\end{samepage}

Given a measurable space $X$, we denote $\Delta X$ the space of probability measures on $X$. 
Given a set $A$, the notation $A^*$ will denote the set of finite strings over alphabet $A$, i.e.

\[A^* := \bigsqcup_{n = 0}^\infty A^n\]

$A^\omega$ denotes the space of infinite strings over alphabet $A$, equipped with the product topology and the corresponding Borel sigma-algebra. Given $x\in A^\omega$ and $n \in \Nats$, $x_n \in A$ is the $n$-th symbol of the string $x$ (in our conventions, $0 \in \Nats$ so the string begins from the 0th symbol.) Given $h \in A^*$ and $x \in A^\omega$, the notation $h \sqsubset x$ means that $h$ is a prefix of $x$.

Consider an MDP $M$ and some $\pi: \St_M^* \times \St_M \K \A_M$. We think of $\pi$ as a policy, where the first argument is the past history of states and the second argument is the current state. We denote $\MP{M}{\pi} \in \Delta\St_M^\omega$ the probability measure over histories resulting from policy $\pi$ interacting with environment $M$. That is, on each time step we sample an action from $\pi$ applied to previous history and last state, and sample a new state from $\T_M$ applied to last state and sampled action.

\begin{samepage}
\begin{definition}
\label{def:utility}

Given an MDP $M$ and some $\pi: \St_M^* \times \St_M \K \A_M$, we define $\Ut_M: \St_M^\omega \times [0,1) \rightarrow [0,1]$ (the utility function,) $\EU_M^\pi: [0,1) \rightarrow [0,1]$ (expected utility of policy $\pi$,) $\EU_M^\star: [0,1) \rightarrow [0,1]$ (maximal expected utility) and $\Rg_M^\pi:[0,1)\rightarrow[0,1]$ (regret of policy $\pi$) by

\begin{equation}
\Ut_M(x,\gamma) := (1-\gamma)\sum_{n=0}^\infty {\gamma^n \Rew_M\AP{x_n}}
\end{equation}

\begin{equation}
\EU_M^\pi(\gamma) := \Ea{x\sim\MP{M}{\pi}}{\Ut_M(x,\gamma)}
\end{equation}

\begin{equation}
\EU_M^\star(\gamma):=\max_{\pi: \St_M^* \times \St_M \K \A_M} {\EU_M^\pi(\gamma)} = \V_M\AP{s_M,\gamma}
\end{equation}

\begin{equation}
\Rg_M^\pi(\gamma):=\EU_M^\star(\gamma)-\EU_M^\pi(\gamma)
\end{equation}

\end{definition}
\end{samepage}

Next, we define the properties of a policy that make it a \enquote{satisfactory} advisor.

Given $X$ a topological space and $\mu$ a Borel measure on $X$, $\Supp{\mu} \subseteq X$ denotes the \emph{support} of $\mu$.

\begin{samepage}
\begin{definition}
\label{def:sane}

Consider $M$ an MDP, some $\epsilon\in(0,1)$ and some $\Ad: \St_M \K \A_M$. $\Ad$ is called \emph{$\epsilon$-sane for $M$} when for any $s \in \St_M$,

\begin{enumerate}[i.]
\item\label{con:def__sane__safe} $\Supp{\Ad(s)} \subseteq \A_M^0\AP{s}$
\item\label{con:def__sane__bold} There is $a \in \A_M^\star(s)$ s.t. $\Ad(a \mid s) > \epsilon$
\end{enumerate}

So, a policy is $\epsilon$-sane when it doesn't enter traps (destroys long-term value) and when it has a probability of more than $\epsilon$ to take a long-term optimal action.

\end{definition}
\end{samepage}

\Comment{\begin{samepage}
\begin{example}

When $\St_M=\A_M$ and $\forall a,b \in \A_M: \T_M\APM{a}{b,a}=1$ (i.e. the state equals the last action taken,) we get a \emph{multi-armed bandit\footnote{Usually a multi-armed bandit is considered to correspond to a 1 state MDP, but that wouldn't work for us since we allow $\Rew_M$ to be a function of the state only.}.} In this case, for any $a\in\A_M$, $\A_M^0(a) = \A_M$ and $\A_M^\star(a)=\Argmax{}{\Rew_M}$. In particular, condition~\ref{con:def__sane__safe} of Definition~\ref{def:sane} is always true. Condition~~\ref{con:def__sane__bold} holds even for $\Ad(a)$ the uniform distribution, as long as $\epsilon < \Abs{\A_M}^{-1}$.

\end{example}
\end{samepage}

\begin{samepage}
\begin{example}

Assume $M$ is \emph{deterministic}, i.e. $\T_M\APM{t}{s,a}\in\{0,1\}$. Then $M$ can be described as a directed graph whose vertices are $\St_M$ and where there is an edge from $s$ to $t$ iff there exists $a\in\A_M$ s.t. $\T_M\APM{t}{s,a}=1$. In this case, $\V_M^0(s)$ depends only on the strongly connected component of $s$. It can be determined as follows. For each cycle $Y=\AP{s_1, s_2 \ldots s_n}$ we define $\V(Y):=\frac{1}{n}\sum_{m=1}^n \Rew_M\AP{s_m}$. For each strongly connected component $C \subseteq \St_M$, we define $\V(C)$ as the maximum of $\V(Y)$ over $Y$ simple cycles inside $C$. For each $s\in\St_M$, $\V_M^0(s)$ equals the maximum of $\V(C)$ over $C$ components reachable from $s$. $\Q_M^0(s,a)$ equals $\V_M^0(t)$ where $t$ is s.t. $\T_M\APM{t}{s,a}=1$. So, an action $a$ is in $\A_M^0(s)$ iff taking the action doesn't lead to a strongly connected component with lower $\V_M^0$. In particular, if the graph is strongly connected, then $\A_M^0(s)=\A_M$.

\end{example}
\end{samepage}

\begin{samepage}
\begin{example}

TODO: Figure...

\end{example}
\end{samepage}}

Next, we introduce a formalism describing a system of two agents where one (the \enquote{robot}) can delegate actions to another (the \enquote{advisor}.)

\begin{samepage}
\begin{definition}

Given an MDP $M$ and some $\Ad: \St_M \K \A_M$ (the advisor policy), we define the MDP $\MA{M}{\Ad}$ (the environment as perceived by the robot) by

\begin{align}
\A_{\MA{M}{\Ad}}&:=\A_M \sqcup \{\bot\} \\
\St_{\MA{M}{\Ad}}&:=\St_M \times \A_{\MA{M}{\Ad}} \\ 
s_{\MA{M}{\Ad}}&:=\AP{s_M,\bot} \\
\T_{\MA{M}{\Ad}}\APM{\AP{t,c}}{\AP{s,b},a}&:=\begin{cases} \T_M\APM{t}{s,a} \text{ if } a\ne\bot \text{ and } c=\bot \\ \T_M\APM{t}{s,c}\Ad\APM{c}{s} \text{ if } a = \bot \text{ and } c\ne\bot \\ 0 \text{ otherwise} \end{cases} \\
\Rew_{\MA{M}{\Ad}}(s,b)&:= \Rew_M(s)
\end{align}

Here, the action $\bot$ represents delegation and the $\A_{\MA{M}{\Ad}}$ factor in $\St_{\MA{M}{\Ad}}$ represents the action taken by the advisor in the last round (or $\bot$ if there was no delegation.)

\end{definition}
\end{samepage}

We will also use the following shorthand notations

\begin{samepage}
\begin{definition}
\label{def:t}

Given any MDP $M$ and $\gamma\in\AP{\gamma_M,1}$, we define

\begin{equation}
\Tn_{M}(\gamma):=\max_{s \in \St_M} \sup_{\theta\in(\gamma,1)} \Abs{\frac{\D{\V_{M}(s,\theta)}}{\D{\theta}}}
\end{equation}

The above quantity is closely related to the \emph{bias span} parameter, which is known to figure in regret bounds in the no-traps setting (see \cite{Bartlett2009}). Intuitively, it measures how costly can a non-fatal error be (the normalized value lost as a result of such an error is approximately bounded by $(1-\gamma)\Tn_{M}(\gamma)$). It can also be related to the mixing time of the Markov chain resulting from following the optimal policy in the MDP (if $P$ is the maximal period of the chain, and the total variation distance from equilibrium falls as $F\lambda^n$, then $\Tn_M\leq F\frac{1+\lambda}{1-\lambda}+P$), but discussing this in detail is out of the present scope.

\end{definition}
\end{samepage}

\begin{samepage}
\begin{definition}

Given any MDP $M$, we define $\ND_M: \AP{\St_M \times \AP{\A_M \sqcup \{\bot\}}}^\omega \rightarrow \Nats$ by

\begin{equation}
\ND_M(x) := \Abs{\ACM{n\in\Nats}{x_n\in\St_M\times\A_M}} 
\end{equation}

We think of $\ND_M(x)$ as the number of delegations in an infinite history $x$ of the MDP $\MA{M}{\Ad}$ for some $\Ad$.

\end{definition}
\end{samepage}

We can now formulate the main theorem.

For any $n \in \Nats$, we use the notation 

\[[n]:=\ACM{m\in\Nats}{m < n}\] 

We also denote 

\[\Nats^+:=\AC{n \in \Nats \mid n > 0}\]

\begin{samepage}
\begin{theorem}
\label{thm:regret_bound}

There is some constant $C \in (0,\infty)$ s.t. the following holds. Fix some $\epsilon,\eta \in (0,1)$, $T\in\Nats^+$, non-empty finite sets $\St,\A$, some $s_0 \in \St$ and some $\Rew: \St \rightarrow [0,1]$. Consider some $N \in \Nats$ which is $\geq 2$, $\AC{\T^k: \St \times \A \K \St}_{k\in[N]}$ and $\AC{\Ad^k :\St \K \A}_{k\in[N]}$. We regard the pairs $(\T^k,\Ad^k)$ as the set of \emph{hypotheses}, where $\T^k$ represents the transition kernel and $\Ad^k$ the advisor policy. Assume that for each $k\in[N]$, $\Ad^k$ is $\epsilon$-sane for the MDP $M^k:=\AP{\St,\A,s_0,\T^k,\Rew}$. Denote $\A_\X:=\A\sqcup\{\bot\}$ and $\St_\X:=\St \times \A_\X$. Fix some $\gamma\in(0,1)$ s.t. for each $k \in [N]$, $\gamma_{M^k} < \gamma$. Also, denote $L^k:=\MA{M^k}{\Ad^k}$ and $\bar{\Tn}:=\frac{1}{N}\sum_{k=0}^{N-1} \Tn_{M^k}(\gamma)$ (see Definition~\ref{def:t}). Then, there is $\pi^\dagger: \St_\X^* \times \St_\X \K \A_\X$\footnote{$\pi^\dagger$ implicitly depends on $\gamma$: in this sense, it is \emph{not} anytime. It also depends on $\eta$, $T$ and the set of hypotheses.} s.t.

\begin{equation}
\label{eqn:thm__regret_bound__regret}
\frac{1}{N}\sum_{k=0}^{N-1}\Rg_{L^k}^{\pi^\dagger}(\gamma) \leq C\AP{\eta N+\frac{\bar{\Tn}}{T}+\sqrt{\frac{(1-\gamma) T \ln{N}}{\eta}}+\frac{(1-\gamma)T \ln{N}}{\eta^2}\AP{\frac{1}{\epsilon}+\Abs{\A}}}
\end{equation}


\begin{equation}
\label{eqn:thm__regret_bound__delegations}
\forall K \in \Nats: \frac{1}{N}\sum_{k=0}^{N-1}\Pa{\MP{L^k}{\pi^\dagger}}{\ND_{M^k} > K} \leq C\AP{\eta N +\frac{\ln{N}}{K\eta}\AP{\frac{1}{\epsilon}+\Abs{\A}}}
\end{equation}

\end{theorem}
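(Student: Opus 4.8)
The plan is to construct $\pi^\dagger$ as the delegative posterior-sampling algorithm sketched in the introduction and bound its regret and delegation count by decomposing the analysis into the four pieces mentioned there. First I would set up the time-discount/horizon translation: with $\alpha := 1-\gamma$, partition the timeline into consecutive intervals of length $T$ (the theorem's $T$ plays the role of the geometric horizon, and one pays $O(\alpha T)$ per interval for the geometric weighting). Within each interval, $\pi^\dagger$ samples a hypothesis $k$ from its current posterior (which has had all hypotheses of probability below a threshold $\sim \eta$ zeroed out and renormalized — this is the source of the $\eta N$ term, since the total mass discarded over all time is controlled by $\eta$ times the number of hypotheses), computes a Blackwell-optimal policy for $M^k$, and executes it, \emph{except} that on each round it checks whether the intended action $a$ is ``known-safe'': roughly, whether every hypothesis in the support of the posterior assigns $a$ to $\A^0(s)$, or more precisely whether the posterior-probability that $a \notin \A^0_{M}(s)$ is below a small threshold. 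If not known-safe, it plays $\bot$.

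The core of the argument is then four lemmas. (1) \emph{Safety}: with high probability $\pi^\dagger$ never takes an unsafe action itself, because it delegates whenever uncertain, and by $\epsilon$-saneness condition (i) the advisor also never takes an unsafe action; hence the true hypothesis's state trajectory almost surely stays in the ``no-trap'' region, so $\V^0$ is preserved. (2) \emph{Delegation bound via information gain}: delegation is only triggered when the posterior puts non-negligible mass on ``$a$ unsafe,'' and by $\epsilon$-saneness condition (ii) the advisor then reveals an optimal-and-safe action with probability $>\epsilon$; in either case the observation (advisor's action plus next state) carries expected information gain bounded below by something like $\eta\epsilon/(\text{const})$ per delegation relative to hypotheses that haven't been discarded, while total information is $\le \ln N$ — this yields the $\frac{\ln N}{K\eta}(\frac1\epsilon + |\A|)$ bound in~(\ref{eqn:thm__regret_bound__delegations}) after a Markov-inequality step on $\ND$, and also feeds the last term of~(\ref{eqn:thm__regret_bound__regret}) since each delegated round costs up to $\alpha T \cdot \Tn$ in value and there are $O(\cdot)$ of them per unit. (3) \emph{Episodic reduction}: assuming all actions taken are safe, the geometric-discounted regret over the whole timeline is approximated, up to $O(\bar\Tn/T)$ error per interval-normalization, by the sum over intervals of ``episodic regret'' — here one Taylor-expands $\V_{M}(s,\theta)$ in $\alpha$ using Definition~\ref{def:t} to bound the first-order term by $\Tn$, so the error of treating each interval as if value resets is $O(\alpha T \Tn)$, giving the $\bar\Tn/T$ term after normalization. (4) \emph{PSRL episodic regret via reward information gain}: for posterior sampling, the expected episodic regret in an interval is bounded by (a multiple of) the square root of the expected information gain from observing the rewards/transitions during that interval — the standard PSRL argument that sampled-hypothesis and true-hypothesis are exchangeable, so regret only accrues where their optimal policies differ, which is exactly where observations are informative. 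Summing over the $\sim 1/(\alpha T)$ intervals, applying Cauchy–Schwarz against $\sum(\text{info gain}) \le \ln N$, and rescaling gives the $\sqrt{(1-\gamma)T\ln N/\eta}$ term (the $\eta$ in the denominator again coming from the floor on posterior probabilities, which inflates the per-step information-gain-to-regret conversion).

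The main obstacle I expect is step (3) combined with the interaction between delegation and the PSRL regret bound: the clean PSRL episodic analysis assumes the algorithm follows the sampled policy throughout the episode, but $\pi^\dagger$ deviates to $\bot$ on delegated rounds, so one must argue that (a) delegated rounds are rare (step 2) and each costs at most $O(\alpha T \Tn)$, and (b) on a delegated round the advisor's $\epsilon$-probability of playing optimally means the deviation does not destroy the exchangeability argument — essentially one needs a coupling or a careful conditioning showing the ``undelegated'' portion of the trajectory still admits the posterior-sampling regret decomposition. Handling the bookkeeping of three different error budgets ($\eta$-discarding distortion, the $1/\epsilon$ and $|\A|$ factors from delegation, and the $\Tn/T$ Taylor error) simultaneously, and choosing the safety and discard thresholds as the right powers of $\eta$ so that all terms come out as stated, is where the real work lies; the individual probabilistic estimates (concentration of the posterior, Markov's inequality for $\ND$, Cauchy–Schwarz for the info-gain sum) are routine once the decomposition is set up correctly.
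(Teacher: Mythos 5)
Your plan is essentially the paper's own proof: the same delegative posterior-sampling algorithm with episodes of length $T$ and an $\eta$-threshold on the belief, the same four-part decomposition (safety via delegation plus condition (i) of $\epsilon$-saneness; delegation count bounded by an $\Omega(\eta\epsilon)$ per-delegation information gain against total entropy $\ln N$; reduction of discounted regret to episodic regret with $O(\bar{\Tn}/T)$ error via the mean-value/Taylor bound $\Abs{\V_M(s,\gamma)-\V^0_M(s)}\leq \Tn_M(\gamma)(1-\gamma)$ and the martingale property of $\V^0$; and the Thompson-sampling information-ratio argument with Pinsker and Cauchy--Schwarz giving the $\sqrt{(1-\gamma)T\ln N/\eta}$ term), and the same attribution of every term in the bound including the $\eta N$ from discarded posterior mass. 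The one device your plan does not anticipate is how the paper reconciles two requirements you both rely on but which are in tension: the belief must be a genuine posterior (for the exchangeability identity $K_*P=J_*P=\zeta$ and the mutual-information telescoping), yet it must also be bounded below by $\eta$ on its support (for the information-gain lower bounds); the paper resolves this by analyzing an ``imaginary'' environment with an extra observed signal $\beta$ under which the thresholding \emph{is} Bayesian conditioning, and then bounding the total-variation distance to the real setting by $O(\eta N)$ --- without some such construction your step (4) would not literally apply to the truncated belief.
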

\end{samepage}

That is, we have a Bayesian regret bound for learning the true MDP starting from a prior that is a uniform distribution over $N$ hypotheses, each of which is a joint hypothesis about the transition kernel and the advisor. The bound is formulated in terms of $N$. It trivially implies a worst-case regret bound as well, at the cost of another factor of $N$. No doubt it is possible to derive other type of regret bounds for the DRL setting, e.g. in terms of the number of states and actions, but we leave it for future work.

Observe that Theorem~\ref{thm:regret_bound} is non-trivial even without equation~\ref{eqn:thm__regret_bound__delegations}, since, Definition~\ref{def:sane} is s.t. a policy that always delegates might fail to achieve any meaningful regret bound. Indeed, we can consider the special case of a multi-armed bandit, in which all actions are safe and therefore even the random policy is $\epsilon$-sane (as long as $\epsilon<\frac{1}{\Abs{\A}}$). Such a policy has normalized regret $\Omega(1)$, except for the degenerate case when all actions have the same reward.

Note that $\eta$ and $T$ are external parameters of the policy that we can choose however we like ($\eta$ is a probability threshold below which we stop considering hypotheses, and $T$ is the length of episodes for the purpose of posterior sampling; see appendix~\ref{sec:outline}.) Taking appropriate values (that depend on $\gamma$, $N$, $\epsilon$, $\Abs{\A}$ and $\bar{\Tn}$; when $\gamma$ approaches $1$, $\eta$ should fall as $(1-\gamma)^{\frac{1}{4}}$ and $T$ should grow as $(1-\gamma)^{-\frac{1}{4}}$) yields the following

\begin{samepage}
\begin{corollary}
\label{crl:balanced_regret_bound}

There is some constant $C \in (0,\infty)$ s.t. the following holds. Assume the setting of Theorem~\ref{thm:regret_bound}. Assume further that 

\begin{equation}
\label{eqn:crl__balanced_regret_bound__gamma}
\gamma \geq 1 - \frac{\AP{\bar{\Tn}+1}^3}{N^2 \ln{N}}\cdot\min\AP{\epsilon,\frac{1}{\Abs{\A}}}
\end{equation}

Denote

\begin{equation}
\label{eqn:crl__balanced_regret_bound__RC}
\RC:=\AP{N^6 \AP{\ln N} \AP{\frac{1}{\epsilon}+\Abs{\A}}\AP{\bar{\Tn}+1}}^{\sfrac{1}{4}}
\end{equation}

Then, there is $\pi^\dagger: \St_\X^* \times \St_\X \K \A_\X$ s.t. for any $k\in[N]$

\begin{equation}
\label{eqn:crl__balanced_regret_bound__regret}
\Rg_{L^k}^{\pi^\dagger}(\gamma) \leq C\RC(1-\gamma)^{\sfrac{1}{4}}
\end{equation}

\begin{equation}
\label{eqn:crl__balanced_regret_bound__delegations}
\forall K \in \Nats: \Pa{\MP{L^k}{\pi^\dagger}}{\ND_{M^k} > K} \leq C\AP{\Xi(1-\gamma)^{\sfrac{1}{4}}+\frac{1}{K}\AP{\frac{N^6\AP{\ln{N}}^3}{1-\gamma}\AP{\frac{1}{\epsilon}+\Abs{\A}}^3}^{\sfrac{1}{4}}}
\end{equation}

\end{corollary}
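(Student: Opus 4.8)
The plan is to deduce Corollary~\ref{crl:balanced_regret_bound} from Theorem~\ref{thm:regret_bound} by instantiating the two free parameters $\eta$ and $T$ at near-optimal values, and then trading the Bayes-averaged guarantees (\ref{eqn:thm__regret_bound__regret})--(\ref{eqn:thm__regret_bound__delegations}) for the per-hypothesis guarantees (\ref{eqn:crl__balanced_regret_bound__regret})--(\ref{eqn:crl__balanced_regret_bound__delegations}). The latter step is immediate and costs a factor $N$: each summand $\Rg_{L^k}^{\pi^\dagger}(\gamma)$, and each $\Pa{\MP{L^k}{\pi^\dagger}}{\ND_{M^k}>K}$, is non-negative, hence bounded by $N$ times the average over $k$. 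Since the $\eta,T$ below depend only on $\gamma$ and on quantities ($N$, $\epsilon$, $\Abs{\A}$, $\bar{\Tn}$) determined by $\gamma$ and the hypotheses, the policy $\pi^\dagger$ that Theorem~\ref{thm:regret_bound} produces for these parameter values is admissible.

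Write $\alpha:=1-\gamma$ and abbreviate $a:=\bar{\Tn}+1$, $b:=\ln N$, $c:=\tfrac{1}{\epsilon}+\Abs{\A}$, so that $\RC=\AP{N^6abc}^{\sfrac{1}{4}}$ by (\ref{eqn:crl__balanced_regret_bound__RC}). I would take
\[
\eta \asymp \alpha^{\sfrac{1}{4}}\,N^{-\sfrac{1}{2}}\AP{abc}^{\sfrac{1}{4}},
\qquad
T \asymp \Ceil{\alpha^{-\sfrac{1}{4}}\,a^{\sfrac{3}{4}}\,N^{-\sfrac{1}{2}}\AP{bc}^{-\sfrac{1}{4}}},
\]
absorbing hidden constants into $C$; this realizes the $\eta\sim\alpha^{\sfrac{1}{4}}$, $T\sim\alpha^{-\sfrac{1}{4}}$ scaling announced before the corollary, with the $N$-, $\bar{\Tn}$-, $\epsilon$- and $\Abs{\A}$-corrections made explicit. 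Substituting into the bracket of (\ref{eqn:thm__regret_bound__regret}) and multiplying by $N$, a termwise check shows that $\eta N^2$, $N\bar{\Tn}/T$ (using $\bar{\Tn}\le a$) and $N\alpha T\AP{\ln N}\eta^{-2}\AP{\tfrac{1}{\epsilon}+\Abs{\A}}$ are each $\asymp N^{\sfrac{3}{2}}\AP{abc}^{\sfrac{1}{4}}\alpha^{\sfrac{1}{4}}=\RC\alpha^{\sfrac{1}{4}}$, while $N\sqrt{\alpha T\AP{\ln N}\eta^{-1}}\asymp N\AP{ab/c}^{\sfrac{1}{4}}\alpha^{\sfrac{1}{4}}$ is slack, smaller than the others by the factor $N^{\sfrac{1}{2}}c^{\sfrac{1}{2}}\ge1$. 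Summing gives (\ref{eqn:crl__balanced_regret_bound__regret}). Multiplying (\ref{eqn:thm__regret_bound__delegations}) by $N$, the two terms become $\eta N^2\asymp\RC\alpha^{\sfrac{1}{4}}$ and $N\AP{\ln N}(K\eta)^{-1}\AP{\tfrac{1}{\epsilon}+\Abs{\A}}\asymp K^{-1}N^{\sfrac{3}{2}}\AP{b^3c^3/(a\alpha)}^{\sfrac{1}{4}}\le K^{-1}\AP{N^6\AP{\ln N}^3\alpha^{-1}\AP{\tfrac{1}{\epsilon}+\Abs{\A}}^3}^{\sfrac{1}{4}}$ (the last inequality since $a\ge1$), which sum to (\ref{eqn:crl__balanced_regret_bound__delegations}).

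It remains to check that these $\eta,T$ obey the admissibility constraints $\eta\in(0,1)$ and $T\in\Nats^+$ of Theorem~\ref{thm:regret_bound}, and that the simplifications $\min\AP{\epsilon,\Abs{\A}^{-1}}\leftrightarrow\AP{\tfrac{1}{\epsilon}+\Abs{\A}}^{-1}$ and $T=\Ceil{\cdot}$ versus $T=\AP{\cdot}$ each cost only universal constants. Hypothesis (\ref{eqn:crl__balanced_regret_bound__gamma}) is tailored for this: using $\min\AP{\epsilon,\Abs{\A}^{-1}}=\max\AP{\tfrac{1}{\epsilon},\Abs{\A}}^{-1}\in[c^{-1},2c^{-1}]$, it forces $\alpha$ small enough that $\eta<1$ and $\alpha^{-\sfrac{1}{4}}a^{\sfrac{3}{4}}N^{-\sfrac{1}{2}}\AP{bc}^{-\sfrac{1}{4}}\ge1$ (so rounding up is harmless) whenever it is not vacuous; and when it is vacuous we simply have $\RC\alpha^{\sfrac{1}{4}}\ge1$ (resp.\ $\Xi\alpha^{\sfrac{1}{4}}\ge1$), so (\ref{eqn:crl__balanced_regret_bound__regret}) and (\ref{eqn:crl__balanced_regret_bound__delegations}) hold trivially, since $\Rg_{L^k}^{\pi^\dagger}(\gamma)\le1$ and probabilities are $\le1$ (take $C\ge1$). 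Alternatively, one may case-split on whether $\RC\alpha^{\sfrac{1}{4}}<1$: in the non-trivial case, since $\RC=\AP{N^6abc}^{\sfrac{1}{4}}$ and $N\ge2$, $\alpha$ is automatically small enough for admissibility, which makes the argument self-contained.

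I do not foresee a conceptual obstacle here --- everything downstream of Theorem~\ref{thm:regret_bound} is bookkeeping. The one spot requiring genuine care is the two-parameter balancing, i.e.\ choosing a single $(\eta,T)$ that simultaneously controls all four terms of (\ref{eqn:thm__regret_bound__regret}) and both terms of (\ref{eqn:thm__regret_bound__delegations}), together with the regime bookkeeping that keeps $(\eta,T)$ in range. Because the third term of (\ref{eqn:thm__regret_bound__regret}) turns out to be slack, the system is over-determined, and equating the first, second and fourth terms (and then checking that the third is dominated and that the delegation term lands on the stated expression) is what fixes the exponents in the displayed choice of $\eta$ and $T$.
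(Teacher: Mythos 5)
Your proposal is correct and matches the paper's own proof: the paper sets exactly the same $\eta=(1-\gamma)^{1/4}N^{-1/2}\AP{\ln N}^{1/4}\AP{\frac{1}{\epsilon}+\Abs{\A}}^{1/4}\AP{\bar{\Tn}+1}^{1/4}$ and $T=\Ceil{(1-\gamma)^{-1/4}N^{-1/2}\AP{\ln N}^{-1/4}\AP{\frac{1}{\epsilon}+\Abs{\A}}^{-1/4}\AP{\bar{\Tn}+1}^{3/4}}$, uses hypothesis~(\ref{eqn:crl__balanced_regret_bound__gamma}) to absorb the ceiling, and pays the same factor of $N$ to pass from the Bayes-averaged bounds to per-hypothesis ones. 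Your termwise verification and the case-split handling admissibility of $\eta$ are more explicit than the paper's ``follow straightforwardly,'' but the argument is the same.
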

\end{samepage}

\Comment{The meaning of the factor $\bar{\Tn}+1$ in equation~(\ref{eqn:crl__balanced_regret_bound__RC}) might seem somewhat unclear, however, we can upper bound this factor in terms of parameters related to \emph{mixing time}: see Appendix~\ref{sec:mixing_time}.}

\appendix

\section{Proof Outline}
\label{sec:outline}

We start by giving an explicit description of an algorithm that implements the policy $\pi^\dagger$.

By condition~\ref{con:def__sane__bold} of Definition~\ref{def:sane}, for each $k\in[N]$ we can choose some $\pi^{\star k}: \St \rightarrow \A$ s.t. for any $s \in \St$, $\pi^{\star k}(s) \in \A_{M^k}^\star(s)$ and $\Ad^k\APM{\pi^{\star k}(s)}{s} > \epsilon$. The algorithm is then a variant of posterior sampling reinforcement learning in time intervals of size $T$ (see \cite{Osband2013}), where sampling hypothesis $k$ leads to using policy $\pi^{\star k}$ but delegating when we are uncertain the action is safe. Also, we repeatedly discard hypotheses with probability below $\eta$ from our belief distribution. If the currently sampled hypothesis is discarded, the algorithm continues to select safe actions until the end of the time interval, delegating whenever no action is certainly safe.\pagebreak

\RestyleAlgo{boxed}
\LinesNumbered
\DontPrintSemicolon
\SetKwFor{Loop}{InfiniteLoopBegin}{}{InfiniteLoopEnd}
\SetKwData{Z}{belief}
\SetKwData{J}{hypothesis}
\SetKwData{S}{state}
\SetKwData{RA}{agentAction}

\begin{algorithm}[H]

\SetKwData{NS}{newState}
\SetKwData{AA}{advisorAction}
\SetKwData{GA}{isSafeAction}

\S$\leftarrow s_0$\;
\Z$\leftarrow$ uniform distribution over $[N]$\;
\Loop{}{\label{ln:loop_begin}
        \J$\leftarrow$ sample the distribution \Z\;\label{ln:sample_hypothesis}
        \For{$m=0$ \KwTo $T-1$}{
                \eIf{$\Z(\J) > 0$}{
                        \RA$\leftarrow\pi^{\star\J}\AP{\S}$\;
                        \For{$k=0$ \KwTo $N-1$}{
                                \If{$\Z(k) > 0$ \bf{and} $\Ad^k\APM{\RA}{\S}=0$}{\label{ln:delegate_cond}
                                        \RA$\leftarrow\bot$\label{ln:delegate_primary}
                                }
                        }                        
                }{
                        \RA$\leftarrow\bot$\;\label{ln:delegate_secondary}
                        \For{$a\in\A$}{
                                \GA$\leftarrow$ TRUE\;
                                \For{$k=0$ \KwTo $N-1$}{
                                        \If{$\Z(k) > 0$ \bf{and} $\Ad^k\APM{a}{\S}=0$}{\label{ln:unsafe_cond}
                                                \GA$\leftarrow$ FALSE
                                        }
                                }
                                \If{\GA}{
                                        \RA$\leftarrow a$
                                }
                        }
                }
                take action \RA\;
                $\AP{\NS,\AA}\leftarrow$ make observation\;
                \For{$k=0$ \KwTo $N-1$}{
                   $\Z(k) \leftarrow \Z(k)\cdot\T_{L^k}\APM{\NS,\AA}{\S,\RA}$\;
                }
                \Z$\leftarrow\AP{\sum_{k=0}^{N-1}\Z(k)}^{-1}\cdot\Z$\;\label{ln:first_normalization}
                \For{$k=0$ \KwTo $N-1$}{\label{ln:discard_low_probability_begin}
                   \If{$\Z(k) < \eta$}{
                        $\Z(k) \leftarrow 0$\label{ln:discard}
                   }
                }
                \Z$\leftarrow\AP{\sum_{k=0}^{N-1}\Z(k)}^{-1}\cdot\Z$\;\label{ln:second_normalization}
                \S$\leftarrow$\NS
        }
}

\end{algorithm}

Note that, when the algorithm references $\Ad^k$ one lines \ref{ln:delegate_cond} and \ref{ln:unsafe_cond}, it \emph{doesn't} mean delegation. Instead, the algorithm just examines the $k$-th \emph{hypothesis} about what the advisor may do.

The form of inequalities (\ref{eqn:thm__regret_bound__regret}) and (\ref{eqn:thm__regret_bound__delegations}) is s.t. we can assume w.l.o.g. that $\eta < \frac{1}{N}$ and $\epsilon < \frac{1}{\Abs{\A}}$. In particular, the former assumption ensures that we get no division by 0 in line~\ref{ln:second_normalization} of the algorithm. Line~\ref{ln:first_normalization} might in principle involve division by 0, in which case the behavior of the algorithm can be arbitrary. For example, we may assume that in this case \Z becomes the uniform distribution again (but it doesn't matter.)

Lines~\ref{ln:discard_low_probability_begin}-\ref{ln:second_normalization} discard hypotheses that are too unlikely in order for the agent to take calculated risks (take an action even when there is a small probability of it being unsafe). Technically, in the proof they are necessary in order to apply certain mutual information inequalities (see below.) On the other hand, we also need \Z to coincide with the actual posterior given all observations, which seems like a contradiction. In order to resolve this, we introduce a class of imaginary environments $\AC{L^{k!}}_{k\in[N]}$ in which there is an additional observed signal $\Rev$ taking values in $[N]\sqcup\{\bot\}$ that, in environment $L^{k!}$, takes the value $k$ when $\DataSty{belief}(k) < \eta$ and $\bot$ otherwise. Lines \ref{ln:discard_low_probability_begin}-\ref{ln:second_normalization} then correspond to conditioning \Z on the observation $\beta=\bot$. That is, in the imaginary setting these lines are replaced by the following:

\begin{algorithm}[h]

\setcounter{AlgoLine}{32}

$\beta\leftarrow$ observe

\eIf{$\beta = \bot$}{
        \For{$k=0$ \KwTo $N-1$}{
                \If{$\Z(k) < \eta$}{
                        $\Z(k) \leftarrow 0$\label{ln:beta__discard}
                }
        }
        \Z$\leftarrow\AP{\sum_{k=0}^{N-1}\Z(k)}^{-1}\cdot\Z$\;
}{
        \Z$ \leftarrow \boldsymbol{0}$\;
        $\Z(\beta) \leftarrow 1$\;
}

\end{algorithm}

We will thereby derive the regret bound by (i) deriving a regret bound in the imaginary setting and (ii) bounding the difference between the imaginary setting and the real setting.

Given an MDP $M$ and any $\pi: \St_M^* \times \St_M \K \A$, we define $\FH_{M\pi} \subseteq \St_M^*$ by

\begin{equation}
\FH_{M\pi} := \ACM{h \in \St_M^*}{\Pa{x\sim M\pi}{h \sqsubset x} > 0}
\end{equation}

Observe that, in the imaginary setting, the policy $\IP: \St_\X^* \times \St_\X \K \A_\X$ implemented by our algorithm (which depends explicitly on $k$ because $k$ determines $\beta$) has the property

\begin{equation}
\forall h\in\FH_{L^k\pi^{!k}},s\in\St_\X: \Supp{\IP(h,s)} \subseteq \A_{M^k}^0(s)\cup\{\bot\}
\end{equation}

This is thanks to the condition at line~\ref{ln:delegate_cond} and property~\ref{con:def__sane__safe} of Definition~\ref{def:sane}.

Combining $\pi^{!k}$ with the advisor $\Ad^k$ we get the policy $\AdP{\Ad^k}{\IP}: \St^*\times\St\K\A$ which satisfies (using property~\ref{con:def__sane__safe} of Definition~\ref{def:sane} again)

\begin{equation}
\label{eqn:imaginary_safety}
\forall h\in\FH_{M^k,\AdP{\Ad^k}{\IP}},s\in\St: \Supp{\AdP{\Ad^k}{\IP}(h,s)} \subseteq \A_{M^k}^0(s)
\end{equation}

The regret incurred during each \enquote{episode} of length $T$ can be divided into short-term (associated with the rewards during the episode) and long-term (associated with the rewards after the episode, or, equivalently, with the value of the state reached at the end of the episode.) To describe the short-term regret, we introduce the policies $\AC{\pi_n^{\star k}: \St^* \times \St \K \A}_{n \in \Nats}$ defined by

\begin{equation}
\pi_n^{\star k}(h,s):=\begin{cases} \AdP{\Ad^k}{\IP}(h,s) \text{ if } \Abs{h} < nT \\ \pi^{\star k}(s) \text{ otherwise} \end{cases}
\end{equation}

Here, $\Abs{h}$ denotes the length of $h$. That is, for $h \in \St^m$, $\Abs{h}:=m$.

Define $\Rew_\X: \St_\X \rightarrow [0,1]$ by $\Rew_\X(s,a):=\Rew(s)$. For each $k\in[N]$ and $n\in\Nats$, define $\EU_n^{\star k},\EU_n^{!k}\in[0,1]$ by

\begin{equation}
\label{eqn:eustar}
\EU_n^{\star k}:=\frac{1-\gamma}{1-\gamma^T}\sum_{m=0}^{T-1} \gamma^m \Ea{x\sim{\MP{M^k}{\pi_n^{\star  k}}}}{\Rew\AP{x_{nT+m}}}
\end{equation}

\begin{equation}
\label{eqn:eushriek}
\EU_n^{!k}:=\frac{1-\gamma}{1-\gamma^T}\sum_{m=0}^{T-1} \gamma^m \Ea{x\sim{\MP{L^k}{\IP}}}{\Rew_\X\AP{x_{nT+m}}}
\end{equation}

Due to equation~(\ref{eqn:imaginary_safety}), the long-term regret per episode is $O\AP{\Tn_{M^k}(\gamma)\cdot(1-\gamma)}$. The number of episodes that are significant in terms of time discount is $\frac{1}{(1-\gamma)T}$. Therefore, the total contribution of the long-term regret is $O\AP{\frac{\Tn_{M^k}(\gamma)}{T}}$. This gives us\footnote{See Proposition~\ref{prp:short_long} for the detailed derivation.}

\begin{equation}
\label{eqn:short_long_decomposition}
\Rg_{L^k}^{\IP}(\gamma) = \AP{1-\gamma^T}\sum_{n=0}^\infty \gamma^{nT}\AP{\EU^{\star k}_n-\EU^{!k}_n}+O\AP{\frac{\Tn_{M^k}(\gamma)}{T}}
\end{equation}

In order to further analyze the short-term regret, we introduce the policies\\$\AC{\pi_n^{\sharp k}: \St^* \times \St \K \A}_{k \in [N],n \in \Nats}$. These policies result from modifying the algorithm as follows (starting from line~\ref{ln:loop_begin}.)

\begin{algorithm}[h]

\setcounter{AlgoLine}{2}

\SetKwData{Counter}{episodeNumber}

\Counter$\leftarrow$ 0\;

\Loop{}{
        \eIf{$\Counter < n$}{
                \J$\leftarrow$ sample the distribution \Z
        }{
                \J$\leftarrow k$
        }
        \ldots\;
        \setcounter{AlgoLine}{51}
        \Counter$\leftarrow \Counter + 1$
}

\end{algorithm}

We also define $\EU^{\sharp k}_n\in[0,1]$ by

\begin{equation}
\label{eqn:eusharp}
\EU_n^{\sharp k}:=\frac{1-\gamma}{1-\gamma^T}\sum_{m=0}^{T-1} \gamma^m \Ea{x\sim{\MP{L^k}{\pi_n^{\sharp  k}}}}{\Rew_\X\AP{x_{nT+m}}}
\end{equation}

We can now rewrite $\EU^{\star k}_n-\EU^{!k}_n$ as $\AP{\EU^{\star k}_n-\EU^{\sharp k}_n}+\AP{\EU^{\sharp k}_n-\EU^{!k}_n}$ and bound the contribution of each term separately.

The difference between $\pi^{\star k}_n$ and $\pi^{\sharp k}_n$ is that the latter sometimes delegates even in the $n$-th (and later) episodes\footnote{Technically, $\pi^{\star k}_n$ is a policy for $M^k$ so it's not strictly meaningful to say it \enquote{delegates} at all, but we think of it as delegating when the $\pi^{!k}$ \enquote{subroutine} inside it is called and delegates.}. Therefore, we can bound the difference in expected utilities by bounding the expected number of delegations. Now, delegation is only performed in one of two scenarios, corresponding to line~\ref{ln:delegate_primary} and line~\ref{ln:delegate_secondary}. In the scenario of line~\ref{ln:delegate_primary}, we have the action $\pi^{\star\J}(\S)$ which, with probability at least $\eta$ over hypotheses is taken with probability at least $\epsilon$ by the advisor (at least $\eta$ since this is the minimal value $\Z(\J)$ can have.) On the other hand, with probability at least $\eta$ over hypotheses, the same action is never taken by the advisor (otherwise we wouldn't delegate.) Therefore, observing whether this action is taken by the advisor provides an amount of information about the environment that can be bounded below in terms of $\eta$ and $\epsilon$. In the scenario of line~\ref{ln:delegate_secondary}, there is no action which is known with probability at least $1-\eta$ over hypotheses to be taken by the advisor with positive probability. Since observing the action actually taken by the advisor provides an example of an action which had positive probability, we gain an amount of information that can be bounded from below in terms of $\eta$. In both cases, we can show information gain is $\Omega(\eta\epsilon)$ (see Proposition~\ref{prp:delegation_information}\footnote{We think of $K$ as the (unknown) correct hypothesis, $X$ as the advisor action and $a^*$ as $\pi^{\star\J}(\S)$.}.) Since the initial entropy is $\ln{N}$, this means that the number of delegations is $O\AP{\frac{\ln{N}}{\eta\epsilon}}$ (see Proposition~\ref{prp:delegation}\footnote{We think of $\bar{\Theta}_n$ as the information used to compute \Z (including \S,) $\Psi_n$ as $\pi^{\star\J}(\S)$ when $\Z(\J)>0$ and $\bot$ otherwise, and $Z_n$ as \Z.}.) This is similar to the analysis done in \cite{Russo2016} for ordinary Thompson sampling.

Now we use this to bound $\Abs{\EU^\star_n-\EU^\sharp_n}$. We have

\begin{equation}
\Abs{\EU^{\star k}_n-\EU^{\sharp k}_n} \leq \Pa{x\sim L^k\pi^{\sharp k}_n}{\exists m \in [T]: x_{nT+m+1}\in\St \times \A}
\end{equation}

We bounded the expected number of delegations for $\pi^{!k}$ but not $\pi^{\sharp k}_n$. Since $\pi^{\sharp k}_n$ differs from $\pi^{!k}$ only by always selecting the correct hypothesis at the $n$-th and further episodes, and since the probability of selecting the correct hypothesis at line~\ref{ln:sample_hypothesis} is at least $\eta$, we get

\begin{align*}
\Abs{\EU^{\star k}_n-\EU^{\sharp k}_n} &\leq \frac{1}{\eta}\Pa{x\sim L^k\pi^{!k}}{\exists m \in [T]: x_{nT+m+1}\in\St \times \A} \\ 
&\leq \frac{1}{\eta}\Ea{x\sim L^k\pi^{!k}}{\Abs{\AC{m \in [T]: x_{nT+m+1}\in\St \times \A}}}
\end{align*}

\begin{equation}
\label{eqn:sharp_vs_star_via_delegations}
\sum_{n=0}^\infty {\Abs{\EU^{\star k}_n-\EU^{\sharp k}_n}} \leq \frac{1}{\eta}\Ea{x\sim L^k\pi^{!k}}{\Abs{\AC{n \in \Nats: x_n\in\St \times \A}}}
\end{equation}

\begin{equation}
\label{eqn:sharp_vs_star}
\frac{1}{N}\sum_{k=0}^{N-1}\sum_{n=0}^\infty {\Abs{\EU^{\star k}_n-\EU^{\sharp k}_n}} = O\AP{\frac{\ln{N}}{\eta^2\epsilon}}
\end{equation}

Observing the rewards received during an episode yields information about the environment. The expected information gain can only vanish when you expect to receive the same rewards regardless of which hypothesis is correct, in which case the policy $\pi^{\star\J}$ is optimal regardless of $\J$. This allows us to derive a lower bound for the information gain in terms of the difference between the rewards received by $\pi^{!k}$ and $\pi^{\sharp k}_n$. Denoting $I_n$ the expected information gain in episode $n$, we have (see Proposition~\ref{prp:thompson} and further details in Appendix~\ref{sec:details})

\begin{equation}
\frac{1-\gamma^T}{N}\sum_{k=0}^{N-1}\sum_{n=0}^\infty \gamma^{nT}\AP{\EU^{\sharp k}_n-\EU^{!k}_n} \leq \sqrt{\frac{1-\gamma^T}{2\eta}\sum_{n=0}^\infty \gamma^{nT}I_n}
\end{equation}

Using once again the fact that the initial entropy is $\ln{N}$, this implies

\begin{equation}
\label{eqn:shriek_vs_sharp}
\frac{1-\gamma^T}{N}\sum_{k=0}^{N-1}\sum_{n=0}^\infty \gamma^{nT}\AP{\EU^{\sharp k}_n-\EU^{!k}_n} \leq \sqrt{\frac{\AP{1-\gamma^T}\ln{N}}{2\eta}}
\end{equation}

Combining inequalities (\ref{eqn:short_long_decomposition}), (\ref{eqn:sharp_vs_star}) and (\ref{eqn:shriek_vs_sharp}), we get

\begin{equation}
\label{eqn:regret_shriek}
\frac{1}{N}\sum_{k=0}^{N-1}\Rg_{L^k}^{\IP}(\gamma) = O\AP{\frac{\bar{\Tn}}{T}+\sqrt{\frac{\AP{1-\gamma^T}\ln{N}}{\eta}}+\frac{\AP{1-\gamma^T}\ln{N}}{\eta^2\epsilon}}
\end{equation}

Finally, we observe that, in the real setting (without the $\beta$ signal,) line~\ref{ln:discard} can be reached at most $N-1$ times before the first division by zero at line~\ref{ln:first_normalization}. Moreover, such division by zero will never happen unless the correct hypothesis is discarded. Each time line~\ref{ln:discard} is reached, the probability that $\Z$ assigns probability below $\eta$ to the correct hypothesis is at most $\eta$. Therefore, the probability that the correct hypothesis is discarded is at most $\eta(N-1)$. This allows us to bound the total variation distance between the real setting and imaginary setting by $O(\eta N)$, producing inequality~(\ref{eqn:thm__regret_bound__regret})\footnote{The reason inequality~(\ref{eqn:thm__regret_bound__regret}) has $\frac{1}{\epsilon}+\Abs{\A}$ instead of $\epsilon$ is because we needed to assume w.l.o.g. that $\epsilon < \frac{1}{\Abs{A}}$. On the other hand, the assumption $\frac{1}{\eta}<N$ is justified by the appearance of the $\eta N$ term in the bound. Also, we can use $(1-\gamma)T$ instead of $1-\gamma^T$ because the form of the bound allows assuming w.l.o.g. that $(1-\gamma)T \ll 1$.}.

For reasons we already outlined, we have

\begin{equation}
\frac{1}{N}\sum_{k=0}^{N-1} \Ea{L^k\pi^{!k}}{\ND_{M^k}} = O\AP{\frac{\ln{N}}{\eta\epsilon}}
\end{equation}

Using Markov's inequality, we get

\begin{equation}
\forall K\in\Nats: \frac{1}{N}\sum_{k=0}^{N-1} \Pa{L^k\pi^{!k}}{\ND_{M^k}> K} = O\AP{\frac{\ln{N}}{K\eta\epsilon}}
\end{equation}

Using again the relationship we established between the real and imaginary settings, we get inequality~(\ref{eqn:thm__regret_bound__delegations})\footnote{The reason that the second term of the right hand side of inequality~(\ref{eqn:thm__regret_bound__delegations}) has $\frac{1}{\eta}+N$ instead of $\eta$ and $\frac{1}{\epsilon}+\Abs{\A}$ instead of $\epsilon$ is because we needed to assume w.l.o.g. that $\eta < \frac{1}{N}$ and $\epsilon < \frac{1}{\Abs{A}}$.}.

\section{Proof Details}
\label{sec:details}

\begin{samepage}
\begin{definition}

Given an MDP $M$ and $\pi: \St_M^* \times \St_M \K \A_M$, we define $\Q_{M\pi}: \St_M^* \times \St_M \times [0,1) \rightarrow [0,1]$ by

\begin{equation}
\Q_{M\pi}(h,s,\gamma):=\Ea{a\sim\pi(h,s)}{\Q_M(s,a,\gamma)}
\end{equation}

\end{definition}
\end{samepage}

Given a set $A$, $x\in A^\omega$ and $n\in\Nats$, the notation $x_{:n}$ will indicate the prefix of $x$ of length $n$. That is, $x_{:n} \in A^n$ and $x_{:n} \sqsubset x$.

\begin{samepage}
\begin{proposition}
\label{prp:regret_vq}

Consider an MDP $M$, $\gamma\in(0,1)$ and $\pi: \St_M^* \times \St_M \K \A_{M}$. Then,

\begin{equation}
\Rg_{M}^{\pi}(\gamma)=\sum_{n=0}^\infty {\gamma^n \Ea{x\sim M\pi}{\V_{M}\AP{x_n,\gamma}-\Q_{M\pi}\AP{x_{:n+1},\gamma}}}
\end{equation}

\end{proposition}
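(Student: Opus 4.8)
The plan is to prove the identity $\Rg_M^\pi(\gamma) = \sum_{n=0}^\infty \gamma^n\, \Ea{x\sim M\pi}{\V_M(x_n,\gamma) - \Q_{M\pi}(x_{:n+1},\gamma)}$ by a telescoping argument on the ``value gap'' along trajectories, i.e. the standard performance-difference/regret decomposition specialized to MDPs with geometric discounting. The key observation is that $\EU_M^\star(\gamma) = \V_M(s_M,\gamma)$ by Definition~\ref{def:utility}, while $\EU_M^\pi(\gamma)$ is an expectation of a discounted reward sum, so the regret is $\V_M(s_M,\gamma) - \Ea{x\sim M\pi}{\Ut_M(x,\gamma)}$. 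The goal is to turn $\V_M(s_M,\gamma)$ into a telescoping sum of per-step terms $(1-\gamma)\Rew_M(x_n)$ plus the stated residual, matching term-by-term against $\Ut_M(x,\gamma) = (1-\gamma)\sum_n \gamma^n \Rew_M(x_n)$.

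First I would recall the Bellman-type identity for $\V_M$: from Definition~\ref{def:vq}, $\V_M(s,\gamma) = \max_a \Q_M(s,a,\gamma)$ and $\Q_M(s,a,\gamma) = (1-\gamma)\Rew_M(s) + \gamma\, \Ea{t\sim\T_M(s,a)}{\V_M(t,\gamma)}$; I would also note $\Q_{M\pi}(h,s,\gamma) = \Ea{a\sim\pi(h,s)}{\Q_M(s,a,\gamma)} = (1-\gamma)\Rew_M(s) + \gamma\, \Ea{a\sim\pi(h,s),\, t\sim\T_M(s,a)}{\V_M(t,\gamma)}$. The second step is to introduce, for each $n$, the quantity $W_n := \Ea{x\sim M\pi}{\gamma^n \V_M(x_n,\gamma)}$ (with $W_0 = \V_M(s_M,\gamma)$ since $x_0 = s_M$ almost surely), and to compute $W_n - W_{n+1}$. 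Using the law of total expectation over the history $x_{:n+1}$ and the fact that under $M\pi$ the next state $x_{n+1}$ is drawn from $\T_M(x_n, a)$ with $a\sim\pi(x_{:n},x_n)$, one gets $\Ea{x\sim M\pi}{\gamma^{n+1}\V_M(x_{n+1},\gamma)} = \Ea{x\sim M\pi}{\gamma^n\big(\Q_{M\pi}(x_{:n+1},\gamma) - (1-\gamma)\Rew_M(x_n)\big)}$. Hence $W_n - W_{n+1} = \Ea{x\sim M\pi}{\gamma^n\big(\V_M(x_n,\gamma) - \Q_{M\pi}(x_{:n+1},\gamma)\big)} + (1-\gamma)\gamma^n\,\Ea{x\sim M\pi}{\Rew_M(x_n)}$.

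The third step is to sum over $n$ from $0$ to $\infty$. Since $\V_M$ is bounded in $[0,1]$, $W_n \to 0$ as $n\to\infty$, so the telescoping sum gives $W_0 = \sum_{n=0}^\infty (W_n - W_{n+1})$, i.e. $\V_M(s_M,\gamma) = \sum_{n=0}^\infty \Ea{x\sim M\pi}{\gamma^n(\V_M(x_n,\gamma) - \Q_{M\pi}(x_{:n+1},\gamma))} + (1-\gamma)\sum_{n=0}^\infty \gamma^n \Ea{x\sim M\pi}{\Rew_M(x_n)}$. Recognizing the last sum as $\Ea{x\sim M\pi}{\Ut_M(x,\gamma)} = \EU_M^\pi(\gamma)$ and the left side as $\EU_M^\star(\gamma)$, subtracting yields exactly $\Rg_M^\pi(\gamma) = \sum_{n=0}^\infty \gamma^n \Ea{x\sim M\pi}{\V_M(x_n,\gamma) - \Q_{M\pi}(x_{:n+1},\gamma)}$, as desired.

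The main obstacle — really the only nontrivial point — is justifying the interchange of the infinite sum with the expectation and the validity of the telescoping limit. Both are handled by absolute convergence: each summand is bounded in absolute value by $2\gamma^n$ (since $\V_M$ and $\Q_{M\pi}$ lie in $[0,1]$), so Fubini/Tonelli applies and $W_n \le \gamma^n \to 0$. A secondary bookkeeping point is the conditioning step establishing $\Ea{\cdot}{\gamma^{n+1}\V_M(x_{n+1},\gamma)} = \Ea{\cdot}{\gamma^n(\Q_{M\pi}(x_{:n+1},\gamma) - (1-\gamma)\Rew_M(x_n))}$; this just unpacks the definition of the measure $M\pi$ on $\St_M^\omega$ together with the Bellman identity for $\Q_{M\pi}$, and requires only that $\pi$ is allowed to depend on the full past history $x_{:n}$, which it is.
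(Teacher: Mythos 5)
Your proof is correct and follows essentially the same route as the paper's: both rest on the telescoping identity $\V(s_M,\gamma)=\sum_n\gamma^n(\V(x_n,\gamma)-\gamma\V(x_{n+1},\gamma))$ combined with the Bellman relation for $\Q_{M\pi}$, the only cosmetic difference being that you telescope the expectations $W_n$ while the paper telescopes pointwise along the trajectory and then takes the expectation. No gaps.
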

\end{samepage}

\begin{proof}

For the sake of encumbering the notation less, we will omit the argument $\gamma$ in functions that depend on it. We will also omit the subscript $M$ and denote $s_0:=s_M$.

For any $x \in \St^\omega$ s.t. $s_0 \sqsubset x$, it is easy to see that

$$\EU^{\star}=\V\AP{s_0}=\sum_{n=0}^\infty \gamma^n \AP{\V\AP{x_{n}}-\gamma\V\AP{x_{n+1}}}$$

$$\Ut(x)=(1-\gamma)\sum_{n=0}^\infty \gamma^n \Rew\AP{x_{n}}$$

\begin{align*}
\EU^{\star} - \Ut(x)=\sum_{n=0}^\infty \gamma^n &\AP{\V\AP{x_{n}}-(1-\gamma)\Rew\AP{x_{n}}-\gamma\V\AP{x_{n+1}}} \\ =\sum_{n=0}^\infty \gamma^n &\Big(\V\AP{x_{n}}-\Q_\pi\AP{x_{:n+1}}
\\ &+\Q_\pi\AP{x_{:n+1}}-(1-\gamma)\Rew\AP{x_{n}}-\gamma\V\AP{x_{n+1}}\Big)
\end{align*}

Taking expected value over $x$ w.r.t. $M\pi$, we get

\begin{align*}
\Rg^{\pi}=\sum_{n=0}^\infty \gamma^n \Bigg(&\Ea{M\pi}{\V\AP{x_{n}}-\Q_\pi\AP{x_{:n+1}}} \\ 
+&\Ea{M\pi}{\Q_\pi\AP{x_{:n+1}}-(1-\gamma)\Rew\AP{x_{n}}-\gamma\V\AP{x_{n+1}}}\Bigg)
\end{align*}

Equation~(\ref{eqn:def__vq__q}) implies that the second term vanishes, yielding the desired result.
\end{proof}

\begin{samepage}
\begin{proposition}
\label{prp:short_long}

Consider an MDP $M$, $\gamma\in\AP{\gamma_M,1}$, $T\in\Nats^+$, $\pi^\star: \St_M \K \A_M$ and $\pi^0: \St_M^* \times \St_M \K \A_M$. For any $n \in \Nats$, define $\pi^\star_n: \St_M^* \times \St_M \K \A_M$ by

$$\pi^\star_n(h,s):=\begin{cases} \pi^0(h,s) \text{ if } \Abs{h} < nT \\ \pi^\star\AP{s} \text{ otherwise} \end{cases}$$

Assume that 

\begin{enumerate}[i.]

\item\label{con:prp__short_long__star} For any $s \in \St_M$, $\Supp{\pi^\star(s)} \subseteq \A_M^\star(s)$.
\item\label{con:prp__short_long__zero} For any $h \in \FH_{M\pi^0}$ and $s \in \St_M$, $\Supp{\pi^0(h,s)} \subseteq \A_{M}^0\AP{s}$.

\end{enumerate}

For any $n \in \Nats$, define $\EU_n^\star,\EU_n^0\in[0,1]$ by

\begin{equation}
\label{eqn:prp__short_long__eu_star}
\EU_n^{\star}:=\frac{1-\gamma}{1-\gamma^T}\sum_{m=0}^{T-1} \gamma^m \Ea{x\sim{\MP{M}{\pi_n^{\star}}}}{\Rew\AP{x_{nT+m}}}
\end{equation}

\begin{equation}
\label{eqn:prp__short_long__eu_zero}
\EU_n^{0}:=\frac{1-\gamma}{1-\gamma^T}\sum_{m=0}^{T-1} \gamma^m \Ea{x\sim{\MP{M}{\pi^{0}}}}{\Rew\AP{x_{nT+m}}}
\end{equation}

Then,

\begin{equation}
\Rg^{\pi^0}_M(\gamma) \leq \AP{1-\gamma^T}\sum_{n=0}^\infty \gamma^{nT}\AP{\EU^\star_n-\EU^0_n} + 2\Tn_M(\gamma)\cdot\frac{\gamma^T(1-\gamma)}{1-\gamma^T}
\end{equation}

\end{proposition}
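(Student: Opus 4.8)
The plan is to apply Proposition~\ref{prp:regret_vq} to the policy $\pi^0$ and then compare, episode by episode, the per-step regret of $\pi^0$ against the per-step regret of the hybrid policy $\pi^\star_n$ which is ``forced'' to play the Blackwell-optimal $\pi^\star$ from episode $n$ onward. The key structural observation is that the right-hand side of the claimed inequality is a sum of a ``short-term'' part, $(1-\gamma^T)\sum_n \gamma^{nT}(\EU^\star_n - \EU^0_n)$, which compares $\pi^\star_n$ and $\pi^0$ only on the rewards collected during episode $n$ itself, and a ``long-term'' error term of size $O(\Tn_M(\gamma)\cdot\gamma^T(1-\gamma)/(1-\gamma^T))$, which should absorb the difference in the \emph{value of the state reached at the end of episode $n$}.

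First I would fix an episode index $n$ and write $\EU^\star_n - \EU^0_n$ in terms of value functions. Since both policies agree on the first $nT$ steps (they both follow $\pi^0$ on histories of length $< nT$ — note $\pi^\star_n$ and $\pi^0$ induce the same distribution on $x_{:nT}$), the state $x_{nT}$ at the start of episode $n$ has the same distribution under $M\pi^0$ and $M\pi^\star_n$. Conditioned on that state, $\EU^\star_n$ collects $T$ steps of reward under $\pi^\star$ starting from $x_{nT}$. Using condition~\ref{con:prp__short_long__star} and the definition of $\A^\star_M$, playing $\pi^\star$ from any state $s$ (for $\gamma > \gamma_M$) is $\gamma$-optimal, so $\sum_{m=0}^{\infty}\gamma^m \Ea{}{\Rew(x_{nT+m})} = \V_M(x_{nT},\gamma)/(1-\gamma)$ along the $\pi^\star$-trajectory; truncating at $T$ steps and adding back the tail $\gamma^T \V_M(x_{(n+1)T},\gamma)$ I can express $\frac{1-\gamma^T}{1-\gamma}\EU^\star_n$ as $\Ea{M\pi^0}{\V_M(x_{nT},\gamma) - \gamma^T \V_M(x^\star_{(n+1)T},\gamma)}$, where $x^\star$ denotes the $\pi^\star$-continuation. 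Telescoping $(1-\gamma^T)\sum_n \gamma^{nT}\V_M(x_{nT},\gamma)$ over $n$ recovers $\V_M(s_M,\gamma) = \EU^\star_M$, which is the leading term in $\Rg^{\pi^0}_M$ via Proposition~\ref{prp:regret_vq} (or directly via the telescoping identity in its proof). The term $(1-\gamma^T)\sum_n \gamma^{nT}\EU^0_n$ reassembles exactly $\EU^{\pi^0}_M(\gamma) = \Ut$-expectation, since the $\pi^0$-rewards over all episodes glued together give the full discounted sum. So $\Rg^{\pi^0}_M(\gamma)$ equals the short-term sum \emph{plus} a residual telescoping mismatch involving $\V_M(x^\star_{(n+1)T},\gamma)$ versus $\V_M(x_{(n+1)T},\gamma)$ — i.e.\ the value at the state reached by following $\pi^\star$ for the last $T$ steps of episode $n$, versus by following $\pi^0$.

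The main obstacle — and the place where condition~\ref{con:prp__short_long__zero} enters — is bounding this residual. Condition~\ref{con:prp__short_long__zero} says $\pi^0$ only ever plays actions in $\A^0_M(s)$, i.e.\ actions that do not destroy long-run value: $\Q^0_M(s,a) = \V^0_M(s)$ for every action it plays. Hence $\V^0_M$ is a martingale along the $\pi^0$-trajectory, so $\V^0_M(x_{nT})$ has the same expectation for all $n$; the same holds trivially for the $\pi^\star$-continuations by condition~\ref{con:prp__short_long__star}. Therefore $\V_M(x^\star_{(n+1)T},\gamma)$ and $\V_M(x_{(n+1)T},\gamma)$ both differ from the common constant $\Ea{}{\V^0_M(x_{nT})}$ by at most $\sup_{\theta \in (\gamma,1)}|\V_M(\cdot,\theta) - \V_M^0| \le (1-\gamma)\Tn_M(\gamma)$, using the mean value theorem and the fact that $\V_M^0 = \lim_{\theta\to 1}\V_M(\cdot,\theta)$ together with Definition~\ref{def:t}. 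So the per-episode mismatch is $O((1-\gamma)\Tn_M(\gamma))$, and after weighting by $(1-\gamma^T)\gamma^{nT}$ and summing over $n \ge 0$ the geometric factor $\sum_{n\ge 0}\gamma^{(n+1)T} = \gamma^T/(1-\gamma^T)$ produces exactly the stated bound $2\Tn_M(\gamma)\gamma^T(1-\gamma)/(1-\gamma^T)$, the factor $2$ coming from the two sides ($\pi^0$ and $\pi^\star$) of the comparison. I expect the bookkeeping of indices (which episode boundary each value term is evaluated at, and making sure the $n=0$ boundary term $\V_M(s_M,\gamma)$ is clean) to be the fiddliest part, but the conceptual content is entirely the martingale property of $\V^0_M$ under safe policies plus the $O((1-\gamma)\Tn_M)$ gap between $\V_M(\cdot,\gamma)$ and $\V^0_M$.
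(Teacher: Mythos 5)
Your proof is correct and takes essentially the same route as the paper's: the paper derives the identical per-episode decomposition by applying Proposition~\ref{prp:regret_vq} to each interpolating policy $\pi^\star_l$ and subtracting consecutive regrets, which is your value-telescoping identity in different packaging. Both arguments then close with the same two ingredients — the martingale property of $\V^0_M$ under policies supported on $\A^0_M$ (resp.\ $\A^\star_M$) and the mean-value-theorem bound $\Abs{\V_M(\cdot,\gamma)-\V^0_M(\cdot)}\le(1-\gamma)\Tn_M(\gamma)$ — producing the same $2\Tn_M(\gamma)\cdot\gamma^T(1-\gamma)/(1-\gamma^T)$ residual.
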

\end{samepage}

\begin{proof}

For the sake of encumbering the notation less, we will use the shorthands $\Rew_n:=\Rew_M\AP{x_n}$, $\V_n:=\V_M\AP{x_n,\gamma}$, $\V^0_n:=\V^0_M\AP{x_n}$, $\Q_{\pi n}:=\Q_{M\pi}\AP{x_{:n+1},\gamma}$ and $\Tn:=\Tn_M(\gamma)$.

By Proposition~\ref{prp:regret_vq}, for any $l \in \Nats$

$$\Rg_M^{\pi_l^\star} = \sum_{n=0}^\infty{\gamma^n \Ea{M\pi_l^\star}{\V_n-\Q_{\pi^\star_l n}}}$$

$\pi^\star_l$ coincides with $\pi^\star$ after $lT$, therefore the corresponding terms on the right hand side vanish.

$$\Rg_M^{\pi_l^\star} = \sum_{n=0}^{lT-1}{\gamma^n \Ea{M\pi^0}{\V_n-\Q_{\pi^0 n}}}$$

Subtracting the equalities for $l+1$ and $l$, we get

$$\EU_M^{\pi_{l}^\star} - \EU_M^{\pi_{l+1}^\star} = \sum_{n=lT}^{(l+1)T-1}{\gamma^n \Ea{M\pi^0}{\V_n-\Q_{\pi^0n}}}$$

$$(1-\gamma)\sum_{n=0}^\infty {\gamma^n\AP{\Ea{M\pi^\star_l}{\Rew_n}-\Ea{M\pi^\star_{l+1}}{\Rew_n}}} = \sum_{n=lT}^{(l+1)T-1}{\gamma^n \Ea{M\pi^0}{\V_n-\Q_{\pi^0n}}}$$

$\pi^\star_l$ and $\pi^\star_{l+1}$ coincide until $lT$, therefore

$$(1-\gamma)\sum_{n=lT}^\infty {\gamma^n\AP{\Ea{M\pi^\star_l}{\Rew_n}-\Ea{M\pi^\star_{l+1}}{\Rew_n}}} = \sum_{n=lT}^{(l+1)T-1}{\gamma^n \Ea{M\pi^0}{\V_n-\Q_{\pi^0n}}}$$

\Comment{Denote $\rho^*_l:=\mu\bowtie\pi^*_l$, $\rho^0:=\mu\bowtie\pi^0$. We also use the shorthand notations $r_n:=r\AP{x_{:n}}$, $\V_n(\gamma):=\V\AP{x_{:n},\gamma}$, $\Q_n(\gamma):=\Q\AP{x_{:n},x_n^\A,\gamma}$.} Both $\pi^\star_l$ and $\pi^\star_{l+1}$ coincide with $\pi^\star$ after $(l+1)T$, therefore

\begin{align*}
&(1-\gamma)\sum_{n=lT}^{(l+1)T-1} {\gamma^n\AP{\Ea{M\pi^\star_l}{\Rew_n}-\Ea{M\pi^0}{\Rew_n}}}\\ 
&+ \gamma^{(l+1)T}\AP{\Ea{M\pi^\star_l}{\V_{(l+1)T}}-\Ea{M\pi^0}{\V_{(l+1)T}}} = \sum_{n=lT}^{(l+1)T-1}{\gamma^n \Ea{M\pi^0}{\V_n-\Q_{\pi^0n}}}
\end{align*}

By the mean value theorem, for each $s\in\St_M$ we have

$$\V_M^0(s) - \Tn \cdot (1-\gamma) \leq \V_M(s,\gamma) \leq \V_M^0(s) + \Tn \cdot (1-\gamma)$$

It follows that

\begin{align*}
&(1-\gamma)\sum_{n=lT}^{(l+1)T-1} {\gamma^n\AP{\Ea{M\pi^\star_l}{\Rew_n}-\Ea{M\pi^0}{\Rew_n}}}\\ 
&+ \gamma^{(l+1)T}\AP{\Ea{M\pi^\star_l}{\V^0_{(l+1)T}}-\Ea{M\pi^0}{\V^0_{(l+1)T}}+2\Tn\cdot(1-\gamma)} \geq \sum_{n=lT}^{(l+1)T-1}{\gamma^n \Ea{M\pi^0}{\V_n-\Q_{\pi^0n}}}
\end{align*}

\Comment{Here, an expected value w.r.t. the difference of two probability measures is understood to mean the corresponding difference of expected values.}

It is easy to see that assumptions \ref{con:prp__short_long__star} and \ref{con:prp__short_long__zero} imply that $\V_n^0$ is a martingale for $M\pi^\star$ and $M\pi^0$ and therefore

$$\Ea{M\pi^\star_l}{\V^0_{(l+1)T}}=\Ea{M\pi^0}{\V^0_{(l+1)T}}=\V^0\AP{s_M}$$

We get

\begin{equation*}
(1-\gamma)\sum_{n=lT}^{(l+1)T-1} {\gamma^n\AP{\Ea{M\pi^\star_l}{\Rew_n}-\Ea{M\pi^0}{\Rew_n}}}+ 2\Tn\gamma^{(l+1)T}(1-\gamma) \geq \sum_{n=lT}^{(l+1)T-1}{\gamma^n \Ea{M\pi^0}{\V_n-\Q_{\pi^0n}}} 
\end{equation*}

Summing over $l$, we get

\[(1-\gamma)\sum_{l=0}^\infty\sum_{n=lT}^{(l+1)T-1} {\gamma^n\AP{\Ea{M\pi^\star_l}{\Rew_n}-\Ea{M\pi^0}{\Rew_n}}}+ 2\Tn\cdot\frac{\gamma^T(1-\gamma)}{1-\gamma^T} \geq \sum_{n=0}^{\infty}{\gamma^n \Ea{M\pi^0}{\V_n-\Q_{\pi^0n}}} \]

Applying Proposition~\ref{prp:regret_vq} to the right hand side and using equations (\ref{eqn:prp__short_long__eu_star}) and (\ref{eqn:prp__short_long__eu_zero}) we get the desired result.
\end{proof}

Given $(\Omega,P\in\Delta\Omega)$ a probability space, $A,B$ finite sets and $X: \Omega \rightarrow A$, $Y: \Omega \rightarrow B$ random variables, $\Ia{}{X;Y}$ denotes the mutual information between $X$ and $Y$. Given $C$ another finite set and $Z: \Omega \rightarrow C$ another random variable, $\CI{}{X;Y}{Z}: \Omega \rightarrow \Reals$ will denote the random variable obtained by first conditioning on $Z$ and then taking the mutual information between $X$ and $Y$, \emph{not} the expected value of this quantity, as sometimes used. $X_*P\in\Delta A$ denotes the pushforward of $P$ by $X$, i.e. the probability distribution of $X$. $P \mid X: \Omega \rightarrow \Delta\Omega$ denotes the conditional probability measure ($P$ conditioned on the value of $X$.) Given $\mu,\nu \in \Delta A$, $\KLD{\mu}{\nu}$ denotes the Kullb-Leibler divergence of $\mu$ from $\nu$.

\begin{samepage}
\begin{proposition}
\label{prp:delegation_information}

Consider $\A$ a finite set, $a^*\in\A$, $N \in \Nats^+$, $\epsilon \in \AP{0,{\Abs{\A}}^{-1}}$, and $\eta \in (0,1)$. Consider also $(\Omega,P)$ a probability space and random variables $K: \Omega \rightarrow [N]$ and $X: \Omega \rightarrow \A$. Suppose that for every $a \in \A$

\begin{equation}
\Pa{}{\CP{}{X=a}{K} > 0 \land \AP{a = a^* \lor \CP{}{X = a^*}{K} \leq \epsilon}} \leq 1 - \eta
\end{equation}

Then

\begin{equation}
\Ia{}{K;X} \geq \eta \ln\left(1 + \epsilon(1-\epsilon)^{\frac{1}{\epsilon}-1}\right)
\end{equation}

\end{proposition}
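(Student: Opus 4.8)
The plan is to rewrite the mutual information in the standard ``channel'' form. Let $\mu \in \Delta[N]$ be the distribution of $K$, let $q_k \in \Delta\A$ be the conditional distribution of $X$ given $K = k$ (for $k$ with $\mu(k) > 0$), and let $\bar q := \sum_k \mu(k) q_k \in \Delta\A$ be the marginal distribution of $X$; then $\Ia{}{K;X} = \sum_k \mu(k)\, \KLD{q_k}{\bar q}$. Write $\beta := \bar q(a^*)$ and $\mathrm{d}(p\|r) := p\ln\frac pr + (1-p)\ln\frac{1-p}{1-r}$ for the binary relative entropy. Applying the data-processing inequality to the coarsening $X \mapsto \1[X = b]$ gives two bounds I will use repeatedly: $\KLD{q_k}{\bar q} \geq \mathrm{d}(q_k(a^*)\|\beta)$, and, whenever $q_k(b) = 0$, $\KLD{q_k}{\bar q} \geq \mathrm{d}(0\|\bar q(b)) = -\ln(1-\bar q(b))$. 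From the assumption I extract the combinatorial facts: taking $a = a^*$ yields $\mu(S_0) \geq \eta$ for $S_0 := \{k : q_k(a^*) = 0\}$, and taking $a \neq a^*$ yields $\mu(\{k : q_k(a) > 0 \text{ and } q_k(a^*) \leq \epsilon\}) \leq 1 - \eta$.

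The analytic core is a one-variable lemma. Put $\Lambda := \epsilon(1-\epsilon)^{1/\epsilon - 1}$ and $t^\star := \Lambda/(1+\Lambda)$. Clearly $-\ln(1-t^\star) = \ln(1+\Lambda)$, and the key computation is that $\mathrm{d}(\epsilon\|t^\star) = \ln(1+\Lambda)$ as well: expanding both logarithms and substituting $\epsilon/\Lambda = (1-\epsilon)^{1 - 1/\epsilon}$, the two $\ln(1-\epsilon)$ contributions cancel and $\epsilon\ln(1+\Lambda) + (1-\epsilon)\ln(1+\Lambda)$ remains --- this is precisely why $\Lambda$ has its particular shape. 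Since $t^\star < \Lambda < \epsilon$, and since $t \mapsto -\ln(1-t)$ is increasing while $t \mapsto \mathrm{d}(\epsilon\|t)$ is decreasing on $(0,\epsilon]$, it follows that $\max\bigl(-\ln(1-t),\, \mathrm{d}(\epsilon\|t)\bigr) \geq \ln(1+\Lambda)$ for every $t \in [0,1)$.

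Now I would split on the size of $\beta$. If $\beta \geq t^\star$, then for every $k \in S_0$ the first data-processing bound gives $\KLD{q_k}{\bar q} \geq -\ln(1-\beta) \geq \ln(1+\Lambda)$, so $\Ia{}{K;X} \geq \mu(S_0)\ln(1+\Lambda) \geq \eta\ln(1+\Lambda)$, as desired. If $\beta < t^\star$ (so $\beta < \epsilon$), set $U := \{b \in \A : \bar q(b) \geq t^\star\}$; some action has $\bar q$-mass at least $\Abs{\A}^{-1} > \epsilon > t^\star$, so $U \neq \emptyset$, and $a^* \notin U$ because $\bar q(a^*) = \beta < t^\star$, hence $U$ contains an action $a_0 \neq a^*$. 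Let $G := \{k : q_k(a^*) > \epsilon\} \cup \{k : q_k(b) = 0 \text{ for some } b \text{ with } \bar q(b) \geq t^\star\}$. Every $k \notin G$ satisfies $q_k(a^*) \leq \epsilon$ and $q_k(b) > 0$ for all $b \in U$, in particular $q_k(a_0) > 0$; so $[N] \setminus G \subseteq \{k : q_k(a_0) > 0 \text{ and } q_k(a^*) \leq \epsilon\}$, whence $\mu([N]\setminus G) \leq 1 - \eta$, i.e. $\mu(G) \geq \eta$. For $k \in G$: if $q_k(b) = 0$ with $\bar q(b) \geq t^\star$ then $\KLD{q_k}{\bar q} \geq -\ln(1-\bar q(b)) \geq \ln(1+\Lambda)$; otherwise $q_k(a^*) > \epsilon > \beta$, so $\KLD{q_k}{\bar q} \geq \mathrm{d}(q_k(a^*)\|\beta) \geq \mathrm{d}(\epsilon\|\beta)$, and since $-\ln(1-\beta) < \ln(1+\Lambda)$ the lemma forces $\mathrm{d}(\epsilon\|\beta) \geq \ln(1+\Lambda)$. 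Either way $\KLD{q_k}{\bar q} \geq \ln(1+\Lambda)$, so again $\Ia{}{K;X} \geq \mu(G)\ln(1+\Lambda) \geq \eta\ln(1+\Lambda)$.

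I expect the main obstacle to be the calculus lemma: one must hit on the threshold $t^\star = \Lambda/(1+\Lambda)$ and verify the identity $\mathrm{d}(\epsilon\|t^\star) = \ln(1+\Lambda)$ by hand (the remainder of the lemma is routine monotonicity). Secondary care is needed for the combinatorial claim $\mu(G) \geq \eta$ --- in particular the point that $a^* \notin U$ when $\beta$ is small, which is exactly what lets the hypothesis for $a \neq a^*$ be applied --- and for the harmless degenerate cases where $\mu(k) = 0$ or $\bar q(b) \in \{0,1\}$, which only make the bounds easier.
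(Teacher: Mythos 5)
Your proof is correct and is essentially the paper's own argument: the same decomposition $\Ia{}{K;X}=\sum_k\mu(k)\KLD{q_k}{\bar q}$, the same data-processing reduction to binary divergences, and the same threshold (the paper's $q\epsilon$ with $q=(\epsilon+(1-\epsilon)^{1-1/\epsilon})^{-1}$ equals your $t^\star=\Lambda/(1+\Lambda)$), with the identity $\mathrm{d}(\epsilon\|t^\star)=-\ln(1-t^\star)=\ln(1+\Lambda)$ verified identically. Your explicit case split on $\beta\gtrless t^\star$ and the sets $S_0$, $G$ merely repackage the paper's single choice of a witness action $a_q$ (equal to $a^*$ when its marginal exceeds the threshold, and some other high-mass action otherwise).
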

\end{samepage}

\begin{proof}

Define $q \in (0,1)$ by

$$q:=\frac{1}{\epsilon+(1-\epsilon)^{1-\frac{1}{\epsilon}}}$$

Let $a_q \in \A$ be s.t. $\Pa{}{X=a_q} > q\epsilon$ and either $a_q = a^*$ or $\Pa{}{X=a^*} \leq q\epsilon$. For every $k \in \Supp{K_*P}$, denote

$$\A_k := \ACM{a \in \A}{\CP{}{X=a}{K=k} > 0 \land \AP{a = a^* \lor \CP{}{X=a^*}{K=k} \leq \epsilon}}$$

If $a_q \not\in \A_k$ then either $\CP{}{X=a_q}{K=k}=0$ or both $\Pa{}{X=a^*}\leq q\epsilon$ and\\ $\CP{}{X=a^*}{K=k} > \epsilon$. In this case, conditioning by $K = k$ causes either the probability of $X = a_q$ to go down from at least $q\epsilon$ to $0$ or the probability of $X = a^*$ to go up from at most $q\epsilon$ to at least $\epsilon$. We get

$$\KLD{X_*(P \mid K = k)}{X_*P} \geq \min\AP{\KLD{0}{q\epsilon},\KLD{\epsilon}{q\epsilon}}$$

We have

\begin{align*}
\KLD{0}{q\epsilon} &= \ln{\frac{1}{1-q\epsilon}}\\
&=\ln{\frac{1}{1-\frac{\epsilon}{\epsilon+(1-\epsilon)^{1-\frac{1}{\epsilon}}}}}\\
&=\ln{\frac{\epsilon+(1-\epsilon)^{1-\frac{1}{\epsilon}}}{\epsilon+(1-\epsilon)^{1-\frac{1}{\epsilon}}-\epsilon}}\\ &=\ln{\AP{1+\epsilon(1-\epsilon)^{\frac{1}{\epsilon}-1}}} 
\end{align*}

\begin{align*}
\KLD{\epsilon}{q\epsilon} &= \epsilon \ln{\frac{\epsilon}{q\epsilon }}+(1-\epsilon)\ln{\frac{1-\epsilon}{1-q\epsilon}}\\
&= \epsilon \ln{\frac{1}{q}}+(1-\epsilon)\ln{(1-\epsilon)} +\ln{\frac{1}{1-q\epsilon}}- \epsilon \ln{\frac{1}{1-q\epsilon}}\\
&= \epsilon \ln{\frac{1-q\epsilon}{q}}+\ln{(1-\epsilon)^{1-\epsilon}} +\ln{\frac{1}{1-q\epsilon}}\\
&= \epsilon \ln{\AP{\frac{1}{q}-\epsilon}}+\ln{(1-\epsilon)^{1-\epsilon}} +\ln{\frac{1}{1-q\epsilon}}\\
&= \epsilon \ln{(1-\epsilon)^{1-\frac{1}{\epsilon}}}+\ln{(1-\epsilon)^{1-\epsilon}} +\ln{\frac{1}{1-q\epsilon}}\\
&= \ln{(1-\epsilon)^{\epsilon-1}}+\ln{(1-\epsilon)^{1-\epsilon}} +\ln{\frac{1}{1-q\epsilon}}\\
&=\ln{\frac{1}{1-q\epsilon}}\\
&=\ln{\AP{1+\epsilon(1-\epsilon)^{\frac{1}{\epsilon}-1}}}
\end{align*}

It follows that

\begin{align*}
\Ia{}{K;X} &= \Ea{}{\KLD{X_*(P \mid K)}{X_*P}} \\ 
&\geq \Pa{}{a_q \not \in \A_K} \ln{\AP{1+\epsilon(1-\epsilon)^{\frac{1}{\epsilon}-1}}} \\
&\geq \eta \ln{\AP{1+\epsilon(1-\epsilon)^{\frac{1}{\epsilon}-1}}}
\end{align*}
\end{proof}

In our notation, propositions about random variables are understood to hold almost surely.

\begin{samepage}
\begin{proposition}
\label{prp:delegation}

Consider non-empty finite sets $\A$ and $\B$, $N \in \Nats^+$, $\epsilon\in\AP{0,\Abs{\A}^{-1}}$, $\eta\in(0,1)$ and\\ $\AC{\Ad^k: \B \K \A}_{k\in[N]}$. Consider also  a probability space $(\Omega,P)$ and random variables $K: \Omega \rightarrow [N]$, $\AC{\bar{\Theta}_n: \Omega \rightarrow \B}_{n\in\Nats}$, $\AC{X_n,\Psi_n: \Omega \rightarrow \A \sqcup \{\bot\}}_{n \in \Nats}$, and $\AC{Z_n: \Omega \rightarrow \Delta[N]}_{n \in \Nats}$. Assume that for any $n\in\Nats$, $k\in[N]$ and $a\in\A$

\begin{enumerate}[i.]

\item\label{con:prp__delegation__xa} $\CP{}{X_{n+1} = a}{K,\bar{\Theta}_n,\Psi_n,Z_n}=\CP{}{X_{n+1}\ne\bot}{K,\bar{\Theta}_n,\Psi_n,Z_n} \Ad^K\APM{a}{\bar{\Theta}_n}$
\item\label{con:prp__delegation__xbot} $X_{n+1} = \bot \iff \exists a \in \A \forall k \in \Supp{Z_n}: \Ad^k\APM{a}{\bar{\Theta}_n} > 0 \land \AP{a=\Psi_n \lor \Ad^k\APM{\Psi_n}{\bar{\Theta}_n} \leq \epsilon}$
\item\label{con:prp__delegation__z} $Z_n(k)=\CP{}{K = k}{\bar{\Theta}_0,\bar{\Theta}_1\dots\bar{\Theta}_{n},\Psi_0,\Psi_1\dots\Psi_n,X_0,X_1 \dots X_{n}}$
\item\label{con:prp__delegation__eta} $Z_n(k) \geq \eta$

\end{enumerate}

Then,

\begin{equation}
\Ea{}{\Abs{\ACM{n\in\Nats^+}{X_n\ne\bot}}} \leq \frac{\ln{N}}{\eta\ln{\AP{1+\epsilon(1-\epsilon)^{\frac{1}{\epsilon}-1}}}}
\end{equation}

\end{proposition}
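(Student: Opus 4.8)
The plan is an information‑theoretic pigeonhole argument: I would show that every delegation forces a fixed positive amount of information about the unknown hypothesis $K$ to be revealed, while the total information that can ever be revealed is at most $\En\AP{K}\leq\ln N$; dividing these quantities yields the bound.

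First I would fix the filtration $\gF_n:=\sigma\AP{\bar\Theta_0,\dots,\bar\Theta_n,\Psi_0,\dots,\Psi_n,X_0,\dots,X_n}$, and let $W_M$ collect all of these variables for indices $\leq M$, so $\gF_M=\sigma\AP{W_M}$ and the $\gF_n$ are increasing. By condition~\ref{con:prp__delegation__z}, $Z_n$ is $\gF_n$-measurable, and by condition~\ref{con:prp__delegation__eta}, $Z_n(k)\geq\eta>0$ for every $k$, so $\Supp Z_n=[N]$; hence by condition~\ref{con:prp__delegation__xbot} the event $E_{n+1}:=\AC{X_{n+1}\ne\bot}$ is a function of $\bar\Theta_n,\Psi_n,Z_n$ and is $\gF_n$-measurable. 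Since $\Ea{}{\Abs{\ACM{n\in\Nats^+}{X_n\ne\bot}}}=\sum_{n=0}^{\infty}\Pa{}{E_{n+1}}$, it suffices to bound this sum.

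Next I would establish the per‑step gain. Fix $n$ and pass to the conditional probability space given $\gF_n$. On $E_{n+1}^c$ we have $X_{n+1}=\bot$ (given $\gF_n$), so $\CI{}{K;X_{n+1}}{\gF_n}=0$ there. On $E_{n+1}$, condition~\ref{con:prp__delegation__xa} (read as pinning down the conditional law of $X_{n+1}$ given $\AP{K,\bar\Theta_n,\Psi_n,Z_n}$, i.e. $X_{n+1}$ conditionally independent of the rest of $\gF_n$) together with condition~\ref{con:prp__delegation__xbot} shows that the conditional law of $X_{n+1}$ given $K$ is $\Ad^K\AP{\cdot\mid\bar\Theta_n}\in\Delta\A$. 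I would then invoke Proposition~\ref{prp:delegation_information} in this conditional space, taking $a^\ast:=\Psi_n$ when $\Psi_n\in\A$ and an arbitrary fixed element of $\A$ otherwise (with the convention $\Ad^k\AP{\bot\mid\bar\Theta_n}=0$). To check its hypothesis, note that for $a\in\A$ the relevant event is $\AC{K\in S_a}$ with $S_a:=\ACM{k\in[N]}{\Ad^k\AP{a\mid\bar\Theta_n}>0 \,\wedge\, \AP{a=a^\ast\vee\Ad^k\AP{a^\ast\mid\bar\Theta_n}\leq\epsilon}}$; by condition~\ref{con:prp__delegation__xbot} and $X_{n+1}\ne\bot$ (together with the inclusion $S_a\subseteq\ACM{k}{\Ad^k\AP{a\mid\bar\Theta_n}>0}$ in the case $\Psi_n=\bot$), $S_a$ is a \emph{proper} subset of $[N]$, so picking $k_0\in[N]\setminus S_a$ and using condition~\ref{con:prp__delegation__eta} gives $\Pa{}{K\in S_a\mid\gF_n}\leq 1-Z_n(k_0)\leq 1-\eta$. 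Proposition~\ref{prp:delegation_information} then yields $\CI{}{K;X_{n+1}}{\gF_n}\geq\eta\ln\AP{1+\epsilon(1-\epsilon)^{\frac{1}{\epsilon}-1}}$ on $E_{n+1}$; writing $c:=\eta\ln\AP{1+\epsilon(1-\epsilon)^{\frac{1}{\epsilon}-1}}$ and taking expectations, $\Ea{}{\CI{}{K;X_{n+1}}{\gF_n}}\geq c\,\Pa{}{E_{n+1}}$. Finally I would sum: since $\gF_{n+1}=\gF_n\vee\sigma\AP{\bar\Theta_{n+1},\Psi_{n+1},X_{n+1}}$, monotonicity of conditional mutual information gives $\Ea{}{\CI{}{K;X_{n+1}}{\gF_n}}\leq\Ea{}{\CI{}{K;\AP{\bar\Theta_{n+1},\Psi_{n+1},X_{n+1}}}{\gF_n}}$, and the chain rule plus dropping a nonnegative term gives $\sum_{n=0}^{M-1}\Ea{}{\CI{}{K;\AP{\bar\Theta_{n+1},\Psi_{n+1},X_{n+1}}}{\gF_n}}\leq\Ia{}{K;W_M}\leq\En\AP{K}\leq\ln N$. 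Letting $M\to\infty$ and combining with the per‑step bound, $c\sum_{n=0}^{\infty}\Pa{}{E_{n+1}}\leq\ln N$, which is the claim.

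The hard part will be the conditional application of Proposition~\ref{prp:delegation_information}: folding the $\Psi_n=\bot$ branch (delegation because no certifiably‑safe action exists) into the same $a^\ast$‑framework, and verifying that the ``$\forall k\in\Supp Z_n$'' quantifier in condition~\ref{con:prp__delegation__xbot} really produces a \emph{proper} subset of $[N]$ — this is exactly where condition~\ref{con:prp__delegation__eta} is indispensable. A secondary (but routine) obstacle is making the per‑step gains telescope despite the potentially $K$‑informative variables $\bar\Theta_n,\Psi_n$ riding along inside $\gF_n$; monotonicity of conditional mutual information disposes of those extra terms for free before the chain rule is applied. The only other care point is interpreting condition~\ref{con:prp__delegation__xa} as a statement about the full conditional law of $X_{n+1}$, not merely a conditional expectation.
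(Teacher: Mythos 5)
Your proposal is correct and follows essentially the same route as the paper's proof: a per-delegation information gain of $\eta\ln\AP{1+\epsilon(1-\epsilon)^{\frac{1}{\epsilon}-1}}$ obtained by applying Proposition~\ref{prp:delegation_information} conditionally on the past (verifying its hypothesis via conditions~\ref{con:prp__delegation__xbot}--\ref{con:prp__delegation__eta} exactly as you do, including the $\Psi_n=\bot$ branch), summed against the total information budget $\ln N$. The only cosmetic difference is that you telescope $\Ia{}{K;W_M}$ along the filtration via the chain rule, whereas the paper telescopes the posterior entropies $\Ena{Z_n}$, which coincides with your quantity by condition~\ref{con:prp__delegation__z}.
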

\end{samepage}

\begin{proof}

\begin{align*}
\ln{N} &\geq \Ea{}{\Ena{Z_0}} \\ 
&\geq \sum_{n=0}^\infty{\Ea{}{\Ena{Z_n}-\Ena{Z_{n+1}}}}\\
&= \sum_{n=0}^\infty{\Ea{}{\CE{}{\Ena{Z_n}-\Ena{Z_{n+1}}}{\Theta_n,\Psi_n,Z_n}}}\\
\end{align*}

Using assumption~\ref{con:prp__delegation__z}, we get

\begin{align}
\ln{N} &\geq \sum_{n=0}^\infty\Ea{}{\CI{}{K;\Theta_{n+1},\Psi_{n+1},X_{n+1}}{\Theta_n,\Psi_n,Z_n}}\nonumber\\ 
&\geq\sum_{n=0}^\infty\Ea{}{\CI{}{K;X_{n+1}}{\Theta_n,\Psi_n,Z_n}}\nonumber\\
\label{eqn:prp__delegation__proof__entropy}&\geq\sum_{n=0}^\infty\Ea{}{\CI{}{K;X_{n+1}}{\Theta_n,\Psi_n,Z_n};X_{n+1}\ne\bot}
\end{align}

Define the random variables $\AC{Q_{nak}: \Omega \rightarrow [0,1]}_{n \in \Nats, a \in \A, k\in[N]}$ by

\[Q_{nak}:=\CP{}{X_{n+1}=a}{K=k,\Theta_n,\Psi_n,Z_n}\]

Define the events $\{D_{nak} \subseteq \Omega\}_{n \in \Nats, a \in \A, k\in[N]}$ by

\[D_{nak}:=\AC{Q_{nak} > 0 \land \AP{a = \Psi_n \lor Q_{n\Psi_n k} \leq \epsilon}}\]

By assumption~\ref{con:prp__delegation__xbot}, the event $X_n=\bot$ is determined by $\Theta_n$, $\Psi_n$ and $Z_n$. Using assumption~\ref{con:prp__delegation__xa}, it follows that for any $n \in \Nats$, $a \in \A$ and $k \in [K]$

\[X_{n+1} \ne \bot \implies Q_{nak} = \Ad^k\APM{a}{\Theta_n}\]

\[X_{n+1} \ne \bot \implies \AP{D_{nak} \iff \Ad^k\APM{a}{\Theta_n} > 0 \land \AP{a = \Psi_n \lor \Ad^k\APM{\Psi_n}{\Theta_n} \leq \epsilon}}\]

Using assumption~\ref{con:prp__delegation__xbot}, we get

\[X_{n+1} \ne \bot \implies \exists k \in \Supp{Z_n}: \neg D_{nak}\]

Using assumption~\ref{con:prp__delegation__eta}

\[X_{n+1} \ne \bot \implies \Pa{k\sim Z_n}{D_{nak}} \leq 1 - \eta\]

Using assumption~\ref{con:prp__delegation__z}

\[X_{n+1} \ne \bot \implies \CP{}{D_{naK}}{\Theta_n,\Psi_n,Z_n} \leq 1 - \eta\]


Applying Proposition~\ref{prp:delegation_information} we conclude

\begin{equation}
\label{eqn:prp__delegation__proof__information}
X_{n+1} \ne \bot \implies \CI{}{K;X_{n+1}}{\Theta_n,\Psi_n,Z_n} \geq \eta \ln\left(1 + \epsilon(1-\epsilon)^{\frac{1}{\epsilon}-1}\right)
\end{equation}

Combining inequality~(\ref{eqn:prp__delegation__proof__entropy}) with inequality~(\ref{eqn:prp__delegation__proof__information}), we get

\[\ln{N} \geq \sum_{n=0}^\infty \Pa{}{X_{n+1}\ne\bot}\eta \ln\left(1 + \epsilon(1-\epsilon)^{\frac{1}{\epsilon}-1}\right)\]

Noticing that $\Ea{}{\Abs{\ACM{n\in\Nats^+}{X_n\ne\bot}}}=\sum_{n=0}^{\infty}\Pa{}{X_{n+1}\ne\bot}$, we get the desired result.
\end{proof}

Given a measurable space $X$ and $\mu,\nu\in\Delta X$, $\Dtva{\mu,\nu}$ will denote the total variation distance between $\mu$ and $\nu$.

\begin{samepage}
\begin{proposition}
\label{prp:thompson}

Consider a probability space $(\Omega, P)$, $N \in \Nats$, $\eta\in(0,1)$, $\zeta\in\Delta[N]$, a finite set $R \subseteq [0,1]$ and random variables $U: \Omega \rightarrow R$, $K: \Omega \rightarrow [N]$ and $J: \Omega \rightarrow [N]$. Assume that

\begin{enumerate}[i.]

\item\label{con:prp__thompson__distribution}$K_*P = J_*P = \zeta$
\item\label{con:prp__thompson__independent}$\Ia{}{K;J} = 0$
\item\label{con:prp__thompson__eta}$\forall k \in \Supp{\zeta}: \zeta(k) \geq \eta$

\end{enumerate}

Then,

\begin{equation}
\Ia{}{K;J,U} \geq 2 \eta \AP{\Ea{}{\CE{}{U}{K,J = K}}-\Ea{}{U}}^2
\end{equation}

\end{proposition}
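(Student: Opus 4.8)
The plan is to reproduce, in this abstract form, the information-theoretic analysis of Thompson sampling from \cite{Russo2016}: $K$ is the true hypothesis, $J$ the sampled one, $U$ the observed signal, and the lower bound $\zeta(k)\geq\eta$ on the support plays the role that $1/\Abs{\A}$ plays there. First I would use the chain rule for mutual information together with assumption~\ref{con:prp__thompson__independent} to write $\Ia{}{K;J,U}=\Ia{}{K;J}+\Ea{}{\CI{}{K;U}{J}}=\Ea{}{\CI{}{K;U}{J}}$. Expanding each conditional mutual information $\CI{}{K;U}{J=j}$ as an expected Kullback--Leibler divergence, and using assumptions~\ref{con:prp__thompson__distribution} and~\ref{con:prp__thompson__independent} (so that $\Pa{}{J=j}=\zeta(j)$ and $\CP{}{K=k}{J=j}=\Pa{}{K=k}=\zeta(k)$), this equals $\sum_{j,k}\zeta(j)\zeta(k)\,\KLD{U_*\AP{P\mid K=k,J=j}}{U_*\AP{P\mid J=j}}$. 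Every summand is non-negative, so keeping only the diagonal terms $j=k$ gives $\Ia{}{K;J,U}\geq\sum_k\zeta(k)^2\,\KLD{U_*\AP{P\mid K=k,J=k}}{U_*\AP{P\mid J=k}}$.

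Next, since $U$ takes values in $R\subseteq[0,1]$, I would apply Pinsker's inequality and the dual characterisation of total variation: for all $\mu,\nu\in\Delta R$, $\KLD{\mu}{\nu}\geq 2\Dtva{\mu,\nu}^2\geq 2\AP{\Ea{u\sim\mu}{u}-\Ea{u\sim\nu}{u}}^2$. Taking $\mu=U_*\AP{P\mid K=k,J=k}$ and $\nu=U_*\AP{P\mid J=k}$ and writing $d_k:=\CE{}{U}{K=k,J=k}-\CE{}{U}{J=k}$, this yields $\Ia{}{K;J,U}\geq 2\sum_k\zeta(k)^2 d_k^2$. By assumption~\ref{con:prp__thompson__eta}, $\zeta(k)^2\geq\eta\,\zeta(k)$ for $k\in\Supp{\zeta}$ (and both sides vanish off the support), so $\Ia{}{K;J,U}\geq 2\eta\sum_k\zeta(k)d_k^2$, and Jensen's inequality gives $\sum_k\zeta(k)d_k^2\geq\AP{\sum_k\zeta(k)d_k}^2$. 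It remains to identify $\sum_k\zeta(k)d_k$ with $\Ea{}{\CE{}{U}{K,J=K}}-\Ea{}{U}$, which holds because $\sum_k\zeta(k)\CE{}{U}{K=k,J=k}=\Ea{}{\CE{}{U}{K,J=K}}$ (as $K_*P=\zeta$) and $\sum_k\zeta(k)\CE{}{U}{J=k}=\Ea{}{U}$ (as $J_*P=\zeta$). Chaining the inequalities then proves the claim.

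I do not anticipate a genuine obstacle: the argument is essentially careful bookkeeping around the information-ratio estimate of \cite{Russo2016}. The step needing the most care is the conditional-mutual-information expansion in the first paragraph --- independence of $K$ and $J$ must be used twice (to discard $\Ia{}{K;J}$ via the chain rule and to evaluate $\CP{}{K=k}{J=j}=\zeta(k)$), and one should check that conditioning on the events $\{J=k\}$ and $\{K=k,J=k\}$ is legitimate for $k\in\Supp{\zeta}$, which holds since then $\Pa{}{K=k,J=k}=\zeta(k)^2\geq\eta^2>0$. A minor point is that the discarded off-diagonal KL terms are exactly what would obstruct a matching upper bound, but since only a lower bound is claimed this loss is immaterial.
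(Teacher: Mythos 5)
Your proposal is correct and follows essentially the same route as the paper's proof: chain rule plus independence to reduce $\Ia{}{K;J,U}$ to $\Ea{}{\CI{}{K;U}{J}}$, Pinsker's inequality to pass from KL divergence to squared differences of expectations, restriction to the diagonal $k=j$, the bound $\zeta(k)\geq\eta$, and Jensen's inequality. The only (immaterial) difference is that you drop the off-diagonal terms before applying Pinsker, whereas the paper applies Pinsker to the full conditional expectation and restricts to the diagonal afterwards.
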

\end{samepage}

\begin{proof}

Using the chain rule for mutual information

\[\Ia{}{K;J,U} = \Ia{}{K;J} + \Ea{}{\CI{}{K;U}{J}}\]

Using assumption~\ref{con:prp__thompson__independent}

\[\Ia{}{K;J,U} = \Ea{}{\CI{}{K;U}{J}}
= \Ea{}{\KLD{U_*\left(P \mid K,J\right)}{\ U_*\left(P \mid J\right)}}\]

Using Pinsker's inequality

$$\Ia{}{K;J,U} \geq 2\Ea{}{\Dtva{U_*\left(P \mid K,J\right),U_*\left(P \mid J\right)}^2} \geq 2\Ea{}{\AP{\Ea{}{U \mid K,J}-\Ea{}{U \mid J}}^2}$$

Denote $U_{kj} := \Ea{}{U \mid K = k, J = j}$. Using assumptions \ref{con:prp__thompson__distribution} and \ref{con:prp__thompson__independent}, we have

\begin{align*}
\Ia{}{K;J,U} &\geq 2\Ea{\substack{k\sim\zeta\\j\sim\zeta}}{\AP{U_{kj}-\Ea{k'\sim\zeta}{U_{k'j}}}^2}\\
&\geq 2\Ea{\substack{k\sim\zeta\\j\sim\zeta}}{\AP{U_{kj}-\Ea{k'\sim\zeta}{U_{k'j}}}^2;k=j}\\
&\geq 2\Ea{j \sim \zeta}{\zeta(j)\AP{U_{jj}-\Ea{k\sim\zeta}{U_{kj}}}^2}
&\end{align*}

Using assumption~\ref{con:prp__thompson__eta}

\begin{align*}
\Ia{}{K;J,U} \geq 2\eta\Ea{j \sim \zeta}{\AP{U_{jj}-\Ea{k\sim\zeta}{U_{kj}}}^2}\geq 2\eta \AP{\Ea{j \sim \zeta}{U_{jj}}-\Ea{\substack{k\sim\zeta\\j\sim\zeta}}{U_{kj}}}^2
\end{align*}

Using assumptions \ref{con:prp__thompson__distribution} and \ref{con:prp__thompson__independent} again, we get the desired result
\end{proof}

Given a proposition $\boldsymbol{\pi}$, the notation $[[\boldsymbol{\pi}]] \in \{0,1\}$ will mean 0 when the $\boldsymbol{\pi}$ is false and 1 when $\boldsymbol{\pi}$ is true.

\begin{proof}[Proof of Theorem~\ref{thm:regret_bound}]

The form of inequalities (\ref{eqn:thm__regret_bound__regret}) and (\ref{eqn:thm__regret_bound__delegations}) is s.t. we can assume w.l.o.g. that $\eta < \frac{1}{N}$ and $\epsilon < \frac{1}{\Abs{\A}}$.

\Comment{To avoid cumbersome notation, whenever $M^k$ should appear a subscript, we will replace it by $k$. }We are going to construct a probability space $(\Omega,P)$ and the random variables $K: \Omega \rightarrow [N]$ and for each $n\in\Nats$

\begin{align*}
Z^\dagger_n,\tilde{Z}^\dagger_n: \Omega &\rightarrow \Delta[N]\\ 
J^\dagger_n: \Omega &\rightarrow [N]\\
\Psi^\dagger_n: \Omega &\rightarrow \A_\X\\
A^\dagger_n: \Omega &\rightarrow \A_\X\\
X^\dagger_n: \Omega &\rightarrow \A_\X\\
\Theta^\dagger_n: \Omega &\rightarrow \St
\end{align*}

We also define $H^\dagger_n: \Omega \rightarrow \St^n$ by

$$H^\dagger_n:= \AP{\Theta^\dagger_0,\Theta^\dagger_1 \dots \Theta^\dagger_{n-1}}$$

By condition~\ref{con:def__sane__bold} of Definition~\ref{def:sane}, for each $k\in[N]$ we can choose some $\pi^{\star k}: \St \rightarrow \A$ s.t. for any $s \in \St$, $\pi^{\star k}(s) \in \A_{M^k}^\star(s)$ and $\Ad^k\APM{\pi^k(s)}{s} > \epsilon$.

We postulate that $K$ is uniformly distributed and for any $k \in [N]$, $l \in \Nats$, $m \in [T]$, $s\in\St$ and $a\in\A_\X$, denoting $n = lT+m$

\[A^\dagger_n = \begin{cases} \Psi^\dagger_n \text{ if } \forall k \in \Supp{Z^\dagger_n}: \Ad^k\APM{\Psi^\dagger_n}{\Theta^\dagger_n} > 0 \\ \text{some } a\in\A \text{ s.t. } \forall k \in \Supp{Z^\dagger_n}: \Ad^k\APM{a}{\Theta^\dagger_n} > 0 \text{ if such exists and } \Psi^\dagger_n=\bot\\\bot \text{ otherwise} \end{cases}\\\]
\begin{align*}
\tilde{Z}^\dagger_0(k)&=\frac{1}{N} \\ 
Z^\dagger_{n}(k) &= \frac{\tilde{Z}^\dagger_{n}(k)[[\tilde{Z}^\dagger_{n}(k) \geq \eta]] }{\sum_{j = 0}^{N-1}\tilde{Z}^\dagger_{n}(j)[[\tilde{Z}^\dagger_{n}(j) \geq \eta]]}\\
\CP{}{J^\dagger_{l} = k}{Z^\dagger_{lT}} &= Z^\dagger_{lT}\left(k\right)\\
\Psi^\dagger_{n} &= \begin{cases} \pi^{\star J^\dagger_l}\AP{\Theta^\dagger_n} \text{ if } Z^\dagger_n\AP{J^\dagger_l} > 0 \\ \bot \text{ otherwise} \end{cases}\\
\Theta^\dagger_0 &= s_0\\
X^\dagger_0 &= \bot\\
\CP{}{\Theta^\dagger_{n+1} = s,X^\dagger_{n+1}=a}{\Theta^\dagger_{n},A^\dagger_n} &= \T_{L^K}\APM{s,a}{\Theta^\dagger_n,A^\dagger_n}\\
\tilde{Z}^\dagger_{n+1}(k)\sum_{j = 0}^{N-1} Z^\dagger_n(j) \T_{L^j}\APM{\Theta^\dagger_{n+1},X^\dagger_{n+1}}{\Theta^\dagger_n,A^\dagger_n}&=Z^\dagger_{n}(k) \T_{L^k}\APM{\Theta^\dagger_{n+1},X^\dagger_{n+1}}{\Theta^\dagger_n,A^\dagger_n}
\end{align*}

Note that the last equation has the form of a Bayesian update which is allowed to be arbitrary when update is on "impossible" information.

This probability space can be constructed using standard arguments from the Kolmogorov extension theorem.

We now define $\pi^\dagger$ s.t. for any $n \in \Nats$, $a \in \A_\X$, $h \in \St_\X^n$ and $s\in\St_\X$

$$\Pr\left[H^\dagger_n=h,\AP{\Theta^\dagger_n,X^\dagger_n}=s\right] > 0 \implies \pi^\dagger\APM{a}{h,s}:=\CP{}{A^\dagger_n=a}{H^\dagger_n=h,\AP{\Theta^\dagger_n,X^\dagger_n}=s}$$

In order to prove $\pi^\dagger$ has the desired properties, we will define the stochastic processes $Z$, $\tilde{Z}$, $J$, $\Psi$, $A$, $X$ and $\Theta$, each process of the same type as its dagger counterpart (thus $\Omega$ is constructed to accommodate them.) These processes are required to satisfy the following:

\[A_n = \begin{cases} \Psi_n \text{ if } \forall k \in \Supp{Z_n}: \Ad^k\APM{\Psi_n}{\Theta_n} > 0\\ \text{some } a\in\A \text{ s.t. } \forall k \in \Supp{Z_n}: \Ad^k\APM{a}{\Theta_n} > 0 \text{ if such exists and } \Psi_n=\bot\\\bot \text{ otherwise} \end{cases}\\\]
\begin{align*}
\tilde{Z}_0(k)&=\frac{1}{N} \\ 
Z_{n}(k) &= \frac{\tilde{Z}_{n}(k)[[\tilde{Z}_{n}(k) \geq \eta]] }{\sum_{j = 0}^{N-1}\tilde{Z}_{n}(j)[[\tilde{Z}_{n}(j) \geq \eta]]}[[\tilde{Z}_{n}(K) \geq \eta]] \\&\ \ \ \ \ + [[K = k]]\cdot [[\tilde{Z}_{n}(K) < \eta]]\\
\CP{}{J_{l} = k}{Z_{lT}} &= Z_{lT}\left(k\right)\\
\Psi_{n} &= \begin{cases} \pi^{\star J_l}\AP{\Theta_n} \text{ if } Z_n\AP{J_l} > 0 \\ \bot \text{ otherwise} \end{cases}\\
\Theta_0 &= s_0\\
X_0 &= s_0\\
\CP{}{\Theta_{n+1} = s,X_{n+1}=a}{\Theta_{n},A_n} &= \T_{L^K}\APM{s,a}{\Theta_n,A_n}\\
\tilde{Z}_{n+1}(k)&=\frac{Z_{n}(k) \T_{L^k}\APM{\Theta_{n+1},X_{n+1}}{\Theta_n,A_n}}{\sum_{j = 0}^{N-1} Z_n(j) \T_{L^j}\APM{\Theta_{n+1},X_{n+1}}{\Theta_n,A_n}}
\end{align*}

As before, we also define $H_n:=\AP{\Theta_0,\Theta_1\dots\Theta_{n-1}}$.

We now construct $\AC{\IP\PoS}_{k\in[N]}$ s.t. for any $n \in \Nats$, $a \in \A_\X$, $h \in \St_\X^n$ and $s\in\St_\X$

\begin{align*}
&\Pr\left[H_n=h,\AP{\Theta_n,X_n}=s,K=k\right] > 0 \implies\\
&\pi^{!k}\APM{a}{h,s}:=\CP{}{A_n=a}{H_n=h,\AP{\Theta_n,X_n}=s,K=k}
\end{align*}

\Comment{...we define $\alpha_{\Ad\pi}: \St_\X^* \K \St^*$ by

$$\alpha_{\sigma\pi} (g \mid h) := [[h = \underline{g}]]C_h\prod_{n = 0}^{\Abs{h}-1} \sum_{a \in \A}\left([[g_n \in \bot a?]] \pi\left(\bot \mid g_{:n}\right)\sigma\left(a \mid h_{:n}\right)+[[g_n \in a\bot?]]\pi\left(a \mid g_{:n}\right)\right)$$

Here, $C_h \in \Reals$ is a constant defined s.t. the probabilities sum to 1. We define the $?$-policy $\left[\sigma\right]\underline{\pi}$ by

$$\left[\sigma\right]\underline{\pi}(a \mid h):=\Pr_{g \sim \alpha_{\sigma\pi}(h)}\left[\pi\left(g\right)=a \lor \left(\pi\left(g\right)=\bot \land \sigma(h)=a\right)\right]$$}

It is easy to see equation~(\ref{eqn:imaginary_safety}) holds, allowing us to apply Proposition~\ref{prp:short_long} and get

\[\Rg_{L^k}^{\IP}(\gamma) \leq \AP{1-\gamma^T}\sum_{n=0}^\infty \gamma^{nT}\AP{\EU^{\star k}_n-\EU^{!k}_n}+2\Tn_{M^k}(\gamma)\cdot\frac{1-\gamma}{1-\gamma^T}\]

Here, $\EU^{\star k}_n$ and $\EU^{!k}_n$ are defined according to equations (\ref{eqn:eustar}) and (\ref{eqn:eushriek}) respectively. We also define the $\pi^{\sharp k}_n\PoS$ by

$$\pi^{\sharp k}_n(a \mid h,s):=\begin{cases} \pi^{!k}(a \mid h) \text{ if } \Abs{h} < nT \\ \CP{}{A_n=a}{H_n=h,\AP{\Theta_n,X_n}=s,K=k,J_n=k} \text{ otherwise} \end{cases}$$

Defining $\EU^{\sharp k}_n$ according to equation~(\ref{eqn:eusharp}), we have

\[\Rg_{L^k}^{\IP}(\gamma) \leq \AP{1-\gamma^T}\sum_{n=0}^\infty \gamma^{nT}\AP{\EU^{\star k}_n-\EU^{\sharp k}_n+\EU^{\sharp k}_n-\EU^{!k}_n}+2\Tn_{M^k}(\gamma)\cdot\frac{1-\gamma}{1-\gamma^T}\]

Using equation~(\ref{eqn:sharp_vs_star_via_delegations}), we can apply Proposition~\ref{prp:delegation}. Indeed, conditions \ref{con:prp__delegation__xa}, \ref{con:prp__delegation__z} and \ref{con:prp__delegation__eta} are straightforward (for an appropriate definition of $\bar{\Theta}$.) To verify condition~\ref{con:prp__delegation__xbot}, consider two cases. In the case $Z_n\AP{J_l} > 0$ (where $l:=\Floor{\sfrac{n}{T}}$,) we have $\Psi_n\ne\bot$ and hence $A_n=\bot$ (equivalently $X_{n+1}\ne\bot)$ if and only if $\exists k \in \Supp{Z_n}: \Ad^k\APM{\Psi_n}{\Theta_n}=0$. This is equivalent to condition~\ref{con:prp__delegation__xbot} since, for any $a\ne\Psi_n$, taking $k=J_l$ makes the proposition false due to the fact that $\Ad^{J_l}\APM{\pi^{\star J_l}\AP{\Theta_n}}{\Theta_n} > \epsilon$ by construction of $\pi^{\star k}$. In the case $Z_n\AP{J_l} = 0$, we have $\Psi_n=\bot$ and hence $A_n=\bot$ (equivalently $X_{n+1}\ne\bot$) if and only if $\forall a\in\A\exists k \in \Supp{Z_n}:\Ad^k\APM{a}{\Theta_n}=0$. This is equivalent to condition~\ref{con:prp__delegation__xbot} since, in this case, $a\ne\Psi_n$ always. We get

\[\frac{1}{N}\sum_{k=0}^{N-1}\Rg_{L^k}^{\IP}(\gamma) \leq \frac{1-\gamma^T}{N}\sum_{k=0}^{N-1}\sum_{n=0}^\infty \gamma^{nT}\AP{\EU^{\sharp k}_n-\EU^{!k}_n}+O\AP{\bar{\Tn}\cdot\frac{1-\gamma}{1-\gamma^T}+\frac{\AP{1-\gamma^T}\ln{N}}{\eta^2\epsilon}}\]

Define the random variables $\AC{U_n : \Omega \rightarrow [0,1]}_{n\in\Nats}$ by 

$$U_n:=\frac{1-\gamma}{1-\gamma^T}\sum_{m=0}^{T-1} \gamma^{m} \Rew\AP{\Theta_{nT+m}}$$

\Comment{Denote 

$$\beta:=\frac{\AP{1-(1-\alpha)^T}\ln N}{\eta^2\epsilon}+\frac{\bar{\tau}\alpha}{1-(1-\alpha)^T}$$}

We get

\begin{align*}
\frac{1}{N}\sum_{k=0}^{N-1}\Rg_{L^k}^{\IP}(\gamma) \leq &\AP{1-\gamma^T}\sum_{n=0}^\infty \gamma^{nT} \Ea{}{\CE{}{U_n}{K, J_n = K, Z_{nT}}-\CE{}{U_n}{Z_{nT}}} \\ 
&+ O\AP{\bar{\Tn}\cdot\frac{1-\gamma}{1-\gamma^T}+\frac{\AP{1-\gamma^T}\ln{N}}{\eta^2\epsilon}}\\
\leq &\sqrt{\AP{1-\gamma^T}\sum_{n=0}^\infty \gamma^{nT} \Ea{}{\AP{\CE{}{U_n}{K, J_n = K, Z_{nT}}-\CE{}{U_n}{Z_{nT}}}^2}} \\ 
&+ O\AP{\bar{\Tn}\cdot\frac{1-\gamma}{1-\gamma^T}+\frac{\AP{1-\gamma^T}\ln{N}}{\eta^2\epsilon}}
\end{align*}

We apply Proposition~\ref{prp:thompson} to each term in the sum over $n$.

\begin{align*}
\frac{1}{N}\sum_{k=0}^{N-1}\Rg_{L^k}^{\IP}(\gamma) = &\sqrt{\AP{1-\gamma^T}\sum_{n=0}^\infty \gamma^{nT} \Ea{}{\frac{1}{2\eta}\CI{}{K;J_n,U_n}{Z_{nT}}}} \\
&+ O\AP{\bar{\Tn}\cdot\frac{1-\gamma}{1-\gamma^T}+\frac{\AP{1-\gamma^T}\ln{N}}{\eta^2\epsilon}}\\
\leq &\sqrt{\frac{1-\gamma^T}{2\eta}\sum_{n=0}^\infty \gamma^{nT} \Ea{}{\Ena{Z_{nT}}-\Ena{Z_{(n+1)T}}}} \\
&+ O\AP{\bar{\Tn}\cdot\frac{1-\gamma}{1-\gamma^T}+\frac{\AP{1-\gamma^T}\ln{N}}{\eta^2\epsilon}}\\
&=O\AP{\bar{\Tn}\cdot\frac{1-\gamma}{1-\gamma^T}+\sqrt{\frac{\AP{1-\gamma^T}\ln{N}}{\eta}}+\frac{\AP{1-\gamma^T}\ln{N}}{\eta^2\epsilon}}
\end{align*}

Thus, we derived equation~(\ref{eqn:regret_shriek}) and the rest of the proof can be completed as in appendix~\ref{sec:outline}.
\end{proof}

\begin{proof}[Proof of Corollary~\ref{crl:balanced_regret_bound}]

We set 

$$\eta:=(1-\gamma)^{1/4} N^{-1/2} \AP{\ln N}^{1/4} \AP{\frac{1}{\epsilon}+\Abs{\A}}^{1/4} \AP{\bar{\Tn}+1}^{1/4}$$  

$$T:=\Ceil{(1-\gamma)^{-1/4}N^{-1/2} \AP{\ln N}^{-1/4} \AP{\frac{1}{\epsilon}+\Abs{\A}}^{-1/4} \AP{\bar{\Tn}+1}^{3/4}}$$

By equation~(\ref{eqn:crl__balanced_regret_bound__gamma}), the expression we round to get $T$ is $\geq 1$, therefore this rounding can be absorbed within the constant factor. Equations (\ref{eqn:crl__balanced_regret_bound__regret}) and (\ref{eqn:crl__balanced_regret_bound__delegations}) follow straightforwardly\footnote{Note that there is an additional factor of $N$ since we now consider bounds for fixed $k\in[N]$ rather than for average over $k$.}.
\end{proof}

\subsubsection*{Acknowledgments}

This work was supported by the Machine Intelligence Research Institute (Berkeley, California, USA).

We wish to thank Alexander Appel for reviewing drafts of this work and providing helpful feedback.

\bibliography{DRL}
\bibliographystyle{iclr2019_conference}

\end{document}